\let\footnote=\endnote
\definecolor{RED}{rgb}{1,0,0}
\definecolor{ORANGE}{rgb}{1,0.5,0}
\definecolor{BLUE}{rgb}{0,0,1}
\newcommand{\indep}{\perp \!\!\! \perp}
\newcommand{\xx}{1}
\newcommand{\yy}{1}
\newcommand{\A}{\mathbb{A}}
\newcommand{\B}{\mathbb{B}}
\newcommand{\C}{\mathbb{C}}
\newcommand{\E}{\mathbb{E}}
\newcommand{\Oo}{\mathbb{O}}
\newcommand{\U}{\mathbb{U}}
\newcommand{\Y}{\mathbb{Y}}
\newcommand{\y}{\pmb{y}}
\newcommand{\ex}{\textnormal{E}}
\newcommand{\p}{\textnormal{Pr}}
\newcommand{\Mm}{\mathcal{T}}
\newcommand{\Bb}{\mathcal{M}}
\newcommand{\stages}[2]{\tikz{\node[shape=circle,draw,inner sep=1pt,fill=#1,minimum size=0.5cm]{${#2}$};}} 
\begin{document}


\RUNAUTHOR{Ballester-Ripoll and Leonelli}

\RUNTITLE{Global Sensitivity Analysis in Probabilistic Graphical Models}

\TITLE{Global Sensitivity Analysis in Probabilistic Graphical Models}

\ARTICLEAUTHORS{%
\AUTHOR{Rafael Ballester-Ripoll}
\AFF{School of Human Sciences \& Technology, IE University, Madrid, Spain.\{\EMAIL{rafael.ballester@ie.edu}\}} 
\AUTHOR{Manuele Leonelli}
\AFF{School of Human Sciences \& Technology, IE University, Madrid, Spain. \{\EMAIL{manuele.leonelli@ie.edu}\}} 
} 

\ABSTRACT{%
We show how to apply Sobol's method of global sensitivity analysis to measure the influence exerted by a set of nodes' evidence on a quantity of interest expressed by a Bayesian network. Our method exploits the network structure so as to transform the problem of Sobol index estimation into that of marginalization inference. This way, we can efficiently compute indices for networks where brute-force or Monte Carlo based estimators for variance-based sensitivity analysis would require millions of costly samples. Moreover, our method gives exact results when exact inference is used, and also supports the case of correlated inputs. The proposed algorithm is inspired by the field of tensor networks, and generalizes earlier tensor sensitivity techniques from the acyclic to the cyclic case. We demonstrate the method on three medium to large Bayesian networks that cover the areas of project risk management and reliability engineering.
}%


\KEYWORDS{Bayesian networks, Global sensitivity analysis,  Sobol indices, Tensor networks, Uncertainty quantification  } 

\maketitle

%


\section{Introduction}

The assessment of the effect of the choice of inputs on the outputs of a mathematical model is a fundamental but often overlooked step of any real-world analysis. The set of techniques to perform such a critical task is usually referred to under the broad name of \emph{sensitivity analysis}. There is an increasing recognition of the importance of performing sensitivity analysis after the definition of a mathematical model and there are now a wide array of recent reviews of the methods available \citep[e.g.][]{Borgonovo2016,Borgonovo2017,Razavi2021,Saltelli2019}.

Probabilistic graphical models and specifically Bayesian networks (BNs) \citep[e.g.][]{Darwiche2009,Koller2009} are a class of models that are widely used for risk assessment of complex operational systems in a variety of domains \citep[e.g.][]{Bielza2014a,Cai2018,Drury2017,Mclachlan2020}. They provide an efficient and intuitive framework to represent the relationship between a vector of variables of interest by simply using a graph. Sensitivity analysis is increasingly becoming part of any real-world BN modeling \citep[e.g.][]{Hanninen2012,Kleemann2017,Makaba2020} and general-purpose software that implements it is now available \citep{Leonelli2021,samIam}.

Sensitivity analysis in BNs can be broken down into two main steps. First, some parameters of the model are varied and the effect of these variations on output probabilities of interest are investigated. For this purpose, a simple mathematical function, usually termed \emph{sensitivity function}, describes an output probability of interest as a function of the BN parameters \citep{Castillo1997,Coupe2002}.  Second, once parameter variations are identified, their effect is summarized by a distance or divergence measure between the original and the varied distributions underlying the BN, most commonly the Chan-Darwiche distance \citep{Chan2005} or the well-known Kullback-Leibler divergence. 

As noticed by \citet{Rohmer2020} all the above-mentioned methods are local since they most often measure the effect of a parameter variation on an output of interest, while the other parameters are kept fixed. Multi-way sensitivity analyses, where more than one parameter is varied contemporaneously, have been proposed \citep[e.g.][]{Bolt2014,Chan2004,Kjaerulff2000,Leonelli2017,Leonelli2019}. However, these quickly become computationally intractable.

Outside of the realm of BNs, the most common approach is the so-called global sensitivity analysis \citep{Saltelli2000,SRACCGST:08} which provides a ``global'', instead of local, representation of how different factors jointly influence some function of the model's output. General GSA is computationally intensive and remains largely unused for probabilistic graphical models. Here we bridge this gap and contribute a practical computation of Sobol indices~\citep{Sobol:90} for variables associated to vertices of a network. To our knowledge, the only other attempt at introducing global sensitivity methods for BNs is due to \citet{LM:17}. However, their approach suffers of a number of drawbacks: i. it is computationally very expensive and requires a complex Monte Carlo simulation; ii. its implementation is not freely available; iii. it can only compute Sobol indices for one vertex of the network.

Our method takes advantage of the similarities between tensor networks \citep[TNs; see e.g.][]{YL:18} and probabilistic graphical models and is part of an ongoing effort to foster interactions between these two communities \citep{RS:18}.
%
%
Like probabilistic networks, TNs have long had an important role in knowledge discovery and mathematical modeling, and are experiencing a surge in popularity along with the latest trends in machine learning~\citep{SS:16}. Although topologies admitting cycles are much more expressive, much of the extant TN literature focuses on learning acyclic structures from observable multidimensional arrays (tensors). Further, the Sobol indices of an acyclic tensor networks can be computed efficiently, especially in the particular case of the \emph{linear tensor network} or \emph{tensor train}~\citep{Rai:14,ZYOKD:15,BPP:18,BPP:19}. Our contribution is \emph{to make exact Sobol sensitivity analysis possible in the case where the function of interest is given by a Bayesian network}; note that such networks can only be cast as tensor networks by introducing cycles (via moralization).

Importantly, by taking advantage of the underlying structure of the BN, the methods proposed here can formally account for correlated inputs, although some restrictions need to be imposed. Global sensitivity methods usually ignore input correlations which can greatly falsify the outcome of the sensitivity analysis \citep{Do2020}, although some recent proposals attempt to embed correlated inputs \citep[e.g.][]{Chastaing2012,Iooss2019,Sh2020}.

The implementation of our methods is freely available in Python (available at \url{https://github.com/rballester/bnsobol}) contributing to the recent effort of promoting global sensitivity analysis \citep{Douglas2020}.

\subsection{Notation}

We use the following throughout the paper:

\begin{itemize}
	\item The capital $Y$ represents random variables, $y$ their realizations and $\mathbb{Y}$ their sample spaces.
	\item Bold letters for vectors, e.g. $\pmb{y}$, $\pmb{Y}$, and blackboard letters for sets, e.g. $\mathbb{A}$, $\mathbb{B}$; $\mathcal{P}(\mathbb{A})$ is the power set of $\mathbb{A}$.
	\item $[n]$ denotes $\{1, \dots, n\}$.
\item For a subset $\mathbb{A} \subset [n]$, we denote $\pmb{Y}_{\mathbb{A}}=(Y_i)_{i\in \mathbb{A}}$ and $\mathbb{Y}_{\mathbb{A}}=\times_{i\in \mathbb{A}}\mathbb{Y}_i$.
	\item Calligraphic letters are used for graphical models: $\mathcal{B}, \mathcal{M}$ and $\mathcal{T}$ for Bayesian networks, Markov random fields and tensor networks, respectively.
	\item If $\mathcal{T}$ is a graphical model and $\mathbb{A}\subset [n]$, $\mathcal{T}(\y_\A)$ is the model that results after setting $\pmb{Y}_\A = \y_\A$.
	\item A subscript, as in $\mathcal{T}_{\mathbb{A}}$, denotes variable elimination, i.e. removing $\pmb{Y}_\A$ from a model $\mathcal{T}$.
	\item The backslash ($\setminus$) denotes the absence of an element in a set. We use it whenever the set in question is clear from the context; for example, for a network $\mathcal{T}$ on variables $[n]$, $\mathcal{T}_{ \setminus i}$ denotes removing $\pmb{Y}_{[n] \setminus \{i\}}$.
\end{itemize}

\section{Bayesian Networks and Graphical Models}
\label{sec:bn}

BNs are one of the most common statistical methods to investigate the dependence structure for a random vector of interest and to efficiently answer inferential queries. Next we review the theory of BNs and discuss their connection to more general probabilistic graphical models.

\subsection{Bayesian Networks}

Consider a random vector $\pmb{Y} =(Y_1,\dots,Y_n)$ of interest, where for the purposes of this paper we assume $Y_i$ to be taking values in a discrete space $\mathbb{Y}_i$, $i\in[n]$. For three random vectors $\pmb{Y}_{\mathbb{A}}$, $\pmb{Y}_{\mathbb{B}}$ and $\pmb{Y}_{\mathbb{C}}$, where $\mathbb{A}, \mathbb{B}, \mathbb{C} \subset [n]$, we say that $\pmb{Y}_{\mathbb{A}}$ is conditionally independent of $\pmb{Y}_{\mathbb{B}}$ given $\pmb{Y}_{\mathbb{C}}$ and write $\pmb{Y}_{\mathbb{A}}\indep \pmb{Y}_{\mathbb{B}}|\pmb{Y}_{\mathbb{C}}$ if 
\[
\textnormal{Pr}(\pmb{Y}_{\mathbb{A}}=\pmb{y}_{\mathbb{A}}|\pmb{Y}_{\mathbb{B}}=\pmb{y}_{\mathbb{B}},\pmb{Y}_{\mathbb{C}}=\pmb{y}_{\mathbb{C}})=\textnormal{Pr}(\pmb{Y}_{\mathbb{A}}=\pmb{y}_{\mathbb{A}}|\pmb{Y}_{\mathbb{C}}=\pmb{y}_{\mathbb{C}}),
\]
for all $\pmb{y}_{\mathbb{A}}\in\mathbb{Y}_{\mathbb{A}}$, $\pmb{y}_{\mathbb{B}}\in\mathbb{Y}_{\mathbb{B}}$ and $\pmb{y}_{\mathbb{C}}\in\mathbb{Y}_{\mathbb{C}}$. In the following, as a shorthand, we write $\textnormal{Pr}(\pmb{Y}_{\mathbb{A}}=\pmb{y}_{\mathbb{A}}|\pmb{Y}_{\mathbb{B}}=\pmb{y}_{\mathbb{B}})$ as $\textnormal{Pr}(\pmb{y}_{\mathbb{A}}|\pmb{y}_{\mathbb{B}})$, for any sets $\mathbb{A}, \mathbb{B} \subseteq[n]$.

A BN gives a visual representation of conditional independence by means of a directed acyclic graph (DAG). For a review of DAGs refer to Appendix 1. The DAG associated to the BN has vertex set $[n]$, i.e. a vertex is associated to each variable of $\pmb{Y}$ and edges represent dependence as formalized next.

\begin{definition}
A BN $\mathcal{B}$ for a discrete random vector $\pmb{Y}$ consists of:
\begin{itemize}
\item a DAG $G$ with vertex set $[n]$ and edge set $\{(j,i): i\in[n], j\in \mathbb{P}_i\}$, encoding $m-1$ conditional independences $Y_i\indep Y_{\mathbb{ND}_i}\;|\, Y_{\mathbb{P}_i}$, where $\mathbb{P}_i$ and $\mathbb{ND}_i$ are the parent and non-descendant sets of $i$ in $G$;
\item conditional probabilities $\textnormal{Pr}(y_i|\pmb{y}_{\mathbb{P}_i})$, for $y_i\in\mathbb{Y}_i$ and $\pmb{y}_{\mathbb{P}_i}\in\mathbb{Y}_{\mathbb{P}_i}$.
\end{itemize}
We let $\mathbb{V}(\mathcal{B})$ and $\mathbb{E}(\mathcal{B})$ denote the vertex and edge sets, respectively, of the DAG of the BN $\mathcal{B}$.
\end{definition}

Figure \ref{fig:bn} gives an illustration of a DAG over five discrete variables $(Y_1,\dots,Y_5)$. This DAG embeds the conditional independences $Y_2\indep Y_1$, $Y_3\indep Y_2|Y_1$, $Y_4\indep Y_3| Y_1,Y_2$ and $Y_5\indep Y_1,Y_2| Y_3,Y_4$. The definition of the BN is completed by a numerical specification of the probabilities of each variable conditional on the possible values of the parents.

\begin{figure}
\FIGURE
{\centering
\begin{tikzpicture}
\renewcommand{\xx}{2}
\renewcommand{\yy}{1.5}
\node (1) at (0*\xx,0*\yy){\stages{white}{1}};
\node (2) at (0*\xx,1*\yy){\stages{white}{2}};
\node (3) at (1*\xx,0*\yy){\stages{white}{3}};
\node (4) at (1*\xx,1*\yy){\stages{white}{4}};
\node (5) at (2*\xx,0.5*\yy){\stages{white}{5}};
\draw[->, line width = 1.1pt] (1) -- (3);
\draw[->, line width = 1.1pt] (2) -- (4);
\draw[->, line width = 1.1pt] (3) -- (5);
\draw[->, line width = 1.1pt] (4) -- (5);
\draw[->, line width = 1.1pt] (1) -- (4);
\end{tikzpicture}}
{A simple DAG of a BN with vertex set $\{1,2,3,4,5\}$ and edge set $\{(1,3),(1,4),(2,4),(3,5),(4,5)\}$.\label{fig:bn}}
{}
\end{figure}

Importantly, a BN specifies a factorization of the joint probability of $\pmb{Y}$ in terms of simpler conditional distributions of individual variables $Y_i$ conditional on their parents $\pmb{Y}_{\mathbb{P}_i}$, $i\in[n]$, i.e.
\begin{equation}
\label{eq:factorization}
\textnormal{Pr}(\pmb{y}) = \prod_{i=2}^n\textnormal{Pr}(y_i|\pmb{y}_{\mathbb{P}_i})\textnormal{Pr}(y_1),
\end{equation}
for $\pmb{y} = (y_1,\dots,y_n)\in\mathbb{Y}$. For the BN in Figure \ref{fig:bn}, it holds that
\[
\textnormal{Pr}(\pmb{y})=\textnormal{Pr}(y_5|y_4,y_3)\textnormal{Pr}(y_4|y_2,y_1)\textnormal{Pr}(y_3|y_1)\textnormal{Pr}(y_2)\textnormal{Pr}(y_1)
\]

\subsection{Using Bayesian Networks}

The DAG associated to a BN provides an intuitive overview of the relationships existing between variables of interest. However, it does also provide a framework to assess if any generic conditional independence holds for a specific subset of the variables via the so-called \emph{d-separation} criterion \citep[see e.g.][]{Darwiche2009}. Furthermore, the DAG provides a framework for fast exact propagation of probabilities and evidence for the computation of any (conditional) probability involving a specific subset of the variables. 

Both of the above tasks usually require a manipulation of the original DAG, called \emph{moralization}. In a nutshell, moralization adds an undirected edge between any two parents of a vertex which are not joined by an edge (hence the word moralize). This is illustrated for the DAG in Figure \ref{fig:bn}. The vertices 3 and 4 are parents of 5, but there is no edge between them. The same argument holds for the vertices 1 and 2, which are parents of 4. Thus the moralization process adds an undirected edge between 1 and 2 and between 3 and 4. The moralized DAG is reported in Figure \ref{fig:skeleton} (left). The \emph{skeleton} of a DAG consists of the moralized DAG where all directed edges are replaced by undirected ones. The skeleton of the DAG in Figure \ref{fig:bn} is given in Figure \ref{fig:skeleton} (right).

The skeleton is an undirected graph, with the extra property of being decomposable (see Appendix 1), giving the same probability to events $\pmb{y}\in\mathbb{Y}$ as the original BN. Such graphs enable for fast probability computations taking advantage of an alternative factorization of $\textnormal{Pr}(\pmb{Y}=\pmb{y})$ to the one in Equation (\ref{eq:factorization}). Let $\mathbb{C}$ be the set of cliques of the skeleton and $\mathbb{S}$ the set of separators of the skeleton. Then
\begin{equation*}
\label{eq:cliques}
\textnormal{Pr}(\pmb{y})=\frac{\prod_{C \in \mathbb{C}}\textnormal{Pr}(\pmb{y}_{C})}{\prod_{S\in\mathbb{S}}\textnormal{Pr}(\pmb{y}_{S})}
\end{equation*}
For example, the skeleton in Figure \ref{fig:skeleton} has cliques $\{1,2,4\}$, $\{1,3,4\}$ and $\{3,4,5\}$, and separators $\{1,4\}$ and $\{3,4\}$. Therefore, its probabilities factorize as
\[
\textnormal{Pr}(\pmb{y})=\frac{\textnormal{Pr}(y_1,y_2,y_4)\textnormal{Pr}(y_1,y_3,y_4)\textnormal{Pr}(y_3,y_4,y_5)}{\textnormal{Pr}(y_1,y_4)\textnormal{Pr}(y_3,y_4)}
\]
where the probabilities on the right-hand side are easily computed from the conditional probabilities defining the BN: for instance, $\textnormal{Pr}(y_1,y_2,y_4)=\textnormal{Pr}(y_4|y_1,y_2)\textnormal{Pr}(y_2)\textnormal{Pr}(y_1)$.

\begin{figure}
\FIGURE
	{\begin{subfigure}[b]{0.48\columnwidth}
\centering
		\begin{tikzpicture}
\renewcommand{\xx}{2}
\renewcommand{\yy}{1.5}
\node (1) at (0*\xx,0*\yy){\stages{white}{1}};
\node (2) at (0*\xx,1*\yy){\stages{white}{2}};
\node (3) at (1*\xx,0*\yy){\stages{white}{3}};
\node (4) at (1*\xx,1*\yy){\stages{white}{4}};
\node (5) at (2*\xx,0.5*\yy){\stages{white}{5}};
\draw[->, line width = 1.1pt] (1) -- (3);
\draw[line width = 1.1pt] (3) -- (4);
\draw[line width = 1.1pt] (1) -- (2);
\draw[->, line width = 1.1pt] (2) -- (4);
\draw[->, line width = 1.1pt] (3) -- (5);
\draw[->, line width = 1.1pt] (4) -- (5);
\draw[->, line width = 1.1pt] (1) -- (4);
\end{tikzpicture}
	\end{subfigure}
	\hfil
	\begin{subfigure}[b]{0.48\columnwidth}
\centering
\begin{tikzpicture}
\renewcommand{\xx}{2}
\renewcommand{\yy}{1.5}
\node (1) at (0*\xx,0*\yy){\stages{white}{1}};
\node (2) at (0*\xx,1*\yy){\stages{white}{2}};
\node (3) at (1*\xx,0*\yy){\stages{white}{3}};
\node (4) at (1*\xx,1*\yy){\stages{white}{4}};
\node (5) at (2*\xx,0.5*\yy){\stages{white}{5}};
\draw[line width = 1.1pt] (1) -- (3);
\draw[line width = 1.1pt] (3) -- (4);
\draw[line width = 1.1pt] (1) -- (2);
\draw[line width = 1.1pt] (2) -- (4);
\draw[line width = 1.1pt] (3) -- (5);
\draw[line width = 1.1pt] (4) -- (5);
\draw[line width = 1.1pt] (1) -- (4);
\end{tikzpicture}
	\end{subfigure}}
	{Moralization process for the BN in Figure \ref{fig:bn}: moralized DAG (left) and skeleton (right). \label{fig:skeleton}}
{}
\end{figure}

%

\subsection{Markov Random Fields}

Starting from a BN, the moralization process returns an undirected graph which provides an efficient framework to carry out a variety of tasks including d-separation, probability propagation, etc. However, in many applications it is customary to define a probabilistic graphical model based on undirected graphs directly \citep[see e.g.][]{Guo2015,Ma2019,Salemi2019}. Such models are usually called \emph{Markov Random Fields} (MRFs). Here we follow \citet{RS:18} and define them via hypergraphs (see the Appendix for details).

\begin{definition}
\label{def:mrf}
An MRF $\mathcal{M}$ with respect to an hypergraph with vertex set $[n]$ and hyperedges $\mathbb{C}_{\mathcal{M}}$ is defined by
\begin{align*}
\phi_{\mathcal{M}}: \mathbb{Y} \rightarrow& \mathbb{R}_{+}\\
 \phi_{\mathcal{M}}(\pmb{y}) \mapsto &\frac{1}{z}\prod_{C\in\mathcal{P}([n])}\phi_{\mathcal{M}}(\pmb{y}_C),
\end{align*} 
where $\phi_{\mathcal{M}}(\pmb{y}_C)=1$ if $C\not\in\mathbb{C}_{\mathcal{M}}$ and otherwise is a generic function from $\mathbb{Y}_\C$ to $\mathbb{R}_{+}$. The functions $\phi_{\mathcal{M}}(\y_\C)$ are called \emph{potentials} and $z$ is a normalizing constant to impose that $\sum_{\y\in\Y}\phi_\mathcal{M}(\y)=1$. We write $\mathbb{V}(\mathcal{M})=[n]$ and $\mathbb{E}(\mathcal{M})=\mathbb{C}_{\mathcal{M}}$.
\end{definition}

The previous definition is very general but is needed when working with TNs. More often, MRFs are defined via a simple undirected graph, and the function $\phi$ factorizes over the cliques of the graph (for this reason we denoted the hyperedges as $\mathbb{C}_\mathcal{M}$). Conversely, Definition \ref{def:mrf} allows the factorization to work over any subsets of the vertex set. Notice that $\phi_\mathcal{M}(\pmb{y})$ is actually a probability distribution since it is non-negative and sums to one.

As an illustration, consider the undirected graph in Figure \ref{fig:skeleton}, whose cliques are $\{1,2,4\}$, $\{1,3,4\}$, $\{3,4,5\}$. An MRF over this undirected graph can be defined by constructing an hypergraph with the same vertex set and as hyperedges its cliques. Its probability distribution can be written as
\[
\textnormal{Pr}(\pmb{y})=\phi_\mathcal{M}(\y)=\frac{1}{z}\phi_{\mathcal{M}}(y_1,y_2,y_4)\phi_{\mathcal{M}}(y_1,y_3,y_4)\phi_{\mathcal{M}}(y_3,y_4,y_5).
\]

Every BN can be cast as an MRF, as formalized next.

\begin{proposition}
\label{prop:eq}
A BN $\mathcal{B}$ with $\mathbb{V}(\mathcal{B})=[n]$ is equivalent to an MRF $\mathcal{M}$ such that $\mathbb{V}(\mathcal{M})=[n]$ and $\mathbb{E}(\mathcal{M})$ is the set of cliques of the skeleton of the DAG of $\mathcal{B}$.
\end{proposition}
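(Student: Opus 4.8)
Here \emph{equivalent} should be read as \emph{inducing the same joint distribution over $\mathbb{Y}$}, so the plan is to build the MRF $\mathcal{M}$ explicitly by distributing the conditional probability tables of $\mathcal{B}$ among the cliques of the skeleton, and then to verify that the resulting product of potentials equals the BN joint distribution with normalizing constant $z=1$. The only point that needs an argument is that each family $\{i\}\cup\mathbb{P}_i$ of the DAG is a complete set in the moralized graph, hence contained in some clique of the skeleton; once this is available, everything else is bookkeeping.

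First I would recall the effect of moralization, as described in Section~\ref{sec:bn}: for every $i\in[n]$, any two parents $j,k\in\mathbb{P}_i$ are joined by an edge of the skeleton (either one already present in the DAG, or one added precisely by the moralization step), and each parent $j\in\mathbb{P}_i$ is joined to $i$ by the original edge $(j,i)$. Hence the family $\mathbb{F}_i:=\{i\}\cup\mathbb{P}_i$ induces a complete subgraph of the skeleton, so there exists a maximal clique $C(i)$ of the skeleton with $\mathbb{F}_i\subseteq C(i)$. I would fix, once and for all, one such assignment $i\mapsto C(i)$.

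Next I would set $\mathbb{V}(\mathcal{M})=[n]$, let $\mathbb{E}(\mathcal{M})$ be the set of cliques of the skeleton, and define, for every clique $C\in\mathbb{E}(\mathcal{M})$, the potential
\[
\phi_{\mathcal{M}}(\pmb{y}_C)\;=\;\prod_{i\,:\,C(i)=C}\textnormal{Pr}(y_i\mid \pmb{y}_{\mathbb{P}_i}),
\]
with the convention that an empty product is $1$; this is a well-defined non-negative function of $\pmb{y}_C$ because $\mathbb{F}_i\subseteq C$ whenever $C(i)=C$. Since every vertex $i$ is assigned to exactly one clique, multiplying all potentials uses each conditional probability table exactly once, so for all $\pmb{y}\in\mathbb{Y}$
\[
\prod_{C\in\mathbb{E}(\mathcal{M})}\phi_{\mathcal{M}}(\pmb{y}_C)\;=\;\prod_{i=1}^{n}\textnormal{Pr}(y_i\mid \pmb{y}_{\mathbb{P}_i})\;=\;\textnormal{Pr}(\pmb{y}),
\]
the last equality being the BN factorization~(\ref{eq:factorization}), with the $i=1$ term read as $\textnormal{Pr}(y_1)$. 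As the right-hand side is already a probability distribution over $\mathbb{Y}$, summing gives $\sum_{\pmb{y}\in\mathbb{Y}}\prod_C\phi_{\mathcal{M}}(\pmb{y}_C)=1$, so the normalizing constant of Definition~\ref{def:mrf} is $z=1$ and $\phi_{\mathcal{M}}(\pmb{y})=\textnormal{Pr}(\pmb{y})$ for every $\pmb{y}$; any clique receiving no table simply carries the trivial potential $1$, consistently with Definition~\ref{def:mrf}. This exhibits the desired MRF with $\mathbb{V}(\mathcal{M})=[n]$, $\mathbb{E}(\mathcal{M})$ the cliques of the skeleton, and the same joint distribution as $\mathcal{B}$.

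I expect the only genuine obstacle to be the clique claim in the second step — that moralization makes every family a complete subgraph — but this is immediate from the definition of moralization; the particular choice of $C(i)$ does not affect the final distribution, so no extra care is needed there.
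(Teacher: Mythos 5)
Your proof is correct. The paper states Proposition~\ref{prop:eq} without a printed proof, so there is nothing to match line by line; your argument is the standard construction and is sound: moralization makes every family $\{i\}\cup\mathbb{P}_i$ complete in the skeleton, each conditional probability table is absorbed into one clique containing its family, and the product of the resulting potentials reproduces the factorization in Equation~(\ref{eq:factorization}), forcing $z=1$ in Definition~\ref{def:mrf}. It is worth noting that the paper's surrounding text hints at a different route, namely the decomposable factorization $\textnormal{Pr}(\pmb{y})=\prod_{C}\textnormal{Pr}(\pmb{y}_C)/\prod_{S}\textnormal{Pr}(\pmb{y}_S)$, which yields potentials that are themselves clique marginals (after absorbing each separator term into an adjacent clique) and is what the worked example following Figure~\ref{fig:skeleton} displays; that route leans on the skeleton being triangulated and on the running intersection property. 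Your CPT-assignment construction avoids those prerequisites entirely and is the more elementary argument, at the cost of producing potentials that are not marginals and depend on the (immaterial) choice of clique assignment $i\mapsto C(i)$ — a point you correctly flag as harmless. One small reading check: "equivalent" here means exactly what you assume, existence of an MRF on the stated hypergraph inducing the same joint distribution, so no converse direction is owed.
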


Our new global sensitivity analysis methods take advantage of the transformation of a BN to its skeleton to perform efficient probability computations. For this reason, although our methods will be illustrated for BNs only, they also apply for MRFs as well as chain graphs (which again are usually transformed to an undirected graph).

In particular, an operation required for probability propagation and that our methods will heavily use is that of \emph{marginalization}. Given an MRF $\mathcal{M}$ over the vector $\pmb{Y}$ and a set $\mathbb{A} \subset [n]$ we want to compute the probability $\textnormal{Pr}(\pmb{y}_{\mathbb{A}})$ which can be derived as
\begin{equation*}
\label{eq:marg}
\textnormal{Pr}(\pmb{y}_{\mathbb{A}})=\sum_{\pmb{y}_{\setminus \mathbb{A}}\in\mathbb{Y}_{\setminus \mathbb{A}}} \textnormal{Pr}(\pmb{y}_{\mathbb{A}},\pmb{y}_{\setminus \mathbb{A}})= \frac{1}{z}\sum_{\pmb{y}_{\setminus \mathbb{A}}\in\mathbb{Y}_{\setminus \mathbb{A}}}\prod_{C\in\mathbb{C}_\mathcal{M}}\phi_{\mathcal{M}}(\pmb{y}_{C}).
\end{equation*}

Since the normalizing constant $z$ is often hard to compute, we will simply work with potentials and un-normalized probabilities and call:
\[
\mathcal{M}_{\setminus \mathbb{A}}(\pmb{y}_{\mathbb{A}})=\sum_{\pmb{y}_{\setminus \A}\in\mathbb{Y}_{\setminus \A}}\prod_{C \in \mathbb{C}_{\mathcal{M}}}\phi_{\mathcal{M}}(\pmb{y}_{C}),
\]
so that $\textnormal{Pr}(\pmb{y}_{\mathbb{A}})=\frac{1}{z}\mathcal{M}_{\setminus \mathbb{A}}(\pmb{y}_{\mathbb{A}})$.


\subsection{Tensor Networks}

Since MRFs encode probabilities in a non-normalized way, they are still subject to the non-negativity constrain. \emph{Tensor networks} drop this condition and thus become a graphical representation for general functions~\cite{RS:18}. Therefore, a TN $\mathcal{T}$ is defined as an MRF $\mathcal{M}$, but whose potentials $\phi_\mathcal{T}$ are generic functions to $\mathbb{R}$.
 For instance, the problem of tensor decomposition seeks to find a low-parametric network that encodes, either exactly or approximately, a given real or complex-valued input tensor.

In this paper we do not aim to learn a TN out of data. Instead, we turn the input BN $\mathcal{B}$, which may have been learned from data, into its equivalent MRF as in Proposition \ref{prop:eq}, and then derive a TN $\mathcal{T}$ from $\mathcal{M}$ by simply letting $\mathbb{V}(\mathcal{T})=\mathbb{V}(\mathcal{M})$, $\mathbb{E}(\mathcal{T})=\mathbb{C}_{\mathcal{M}}$ and $\phi_{\mathcal{T}}(\pmb{y}_C)=\phi_{\mathcal{M}}(\pmb{y}_C)$ if $C\in\mathbb{C}_{\mathcal{M}}$ and 1 otherwise. Such a TN is said to be \emph{derived} from $\mathcal{M}$. Given this TN $\mathcal{T}$ we then manipulate it in order to extract relevant sensitivity indices for the original BN $\mathcal{B}$: see Section \ref{sec:method} below.

\section{Sobol Indices} \label{sec:sobol_indices}

The method of~\cite{Sobol:90} is one of the most powerful frameworks for global sensitivity analysis of a general function $f: \Omega \subseteq \mathbb{R}^k \to \mathbb{R}$, whose domain is usually restricted to a $k$-dimensional rectangle $\Omega = \times_{i \in [k]} \Omega_i$. In our case, as formalized in Section \ref{sec:bn}, $\Omega$ will be a discrete Cartesian product $\mathbb{Y}_{\mathbb{E}}$ for a set $\mathbb{E}$ of variables in a BN.

There are multiple kinds of Sobol indices, each with different nuances of interpretation. Next, we review the two most important ones.

\subsection{Variance Components} \label{sec:variance_components}

The \emph{variance component} associated to variable $Y_i$ is defined as
\begin{equation}
S_i := \frac{\mathrm{Var}_i \left[ \textnormal{E}_{\setminus i}[f] \right] }{\mathrm{Var}[f]},
\label{eq:variance_component}
\end{equation}
where $ \textnormal{E}_{\setminus i}$ is the expectation with respect to $\pmb{Y}_{\setminus i}$ and $\mathrm{Var}_i$ is the variance with respect to $Y_i$. This index measures the \emph{additive effect} that is due to $Y_i$ or, more plainly, how the average model output changes as we vary $y_i$.

\subsection{Total Indices} \label{sec:total_indices}

The \emph{total index} of the variable $Y_i$ swaps the order of expectation and variance in Equation~(\ref{eq:variance_component}):
\begin{equation}
S^T_i := \frac{ \textnormal{E}_{\setminus i} \left[ \mathrm{Var}_i[f] \right] }{\mathrm{Var}[f]}.
\label{eq:total_index}
\end{equation}

The $S^T_i$ measures the overall effect due to $Y_i$ or, in plain words, the average variability that results from changing $y_i$ while leaving all other parameters fixed. This effect generalizes the additive effect. When $f$'s inputs are all mutually independent, this implies $S_i \le S^T_i$; we have $S_i = S^T_i$ if and only if $Y_i$ can be separated from $f$ as an additive component, that is, if the original function can be decomposed as $f(\pmb{y}) = g(y_i) + h(\pmb{y}_{\setminus i})$. 

Among other uses, total indices are a popular tool for model simplification: a small $S^T_i$ guarantees that variable $Y_i$ can be safely \emph{frozen} to any value of its domain and removed from the model.

\subsection{Algorithmic Challenges}

Being able to capture global effects, Sobol indices are highly expressive and have become a gold-standard in sensitivity analysis. However, since both the expectation and the variance in Equations~(\ref{eq:variance_component}) and~(\ref{eq:total_index}) are multidimensional integrals (or multi-index summations, in the discrete case), they are notoriously difficult to compute. Estimation for black-box functions is often undertaken via Monte Carlo sampling routines which, due to the Central Limit Theorem, have a slow convergence rate of $O(\sqrt{n})$ in general where $n$ is the number of samples taken. Even when a Quasi-Monte Carlo sampling plan is used, for example via Sobol or Saltelli sequences~\citep{SRACCGST:08}, large sampling budgets (say, with $n$ between $10^4$ and $10^8$) are often needed in practice for each and every Sobol index. Alternatively, more efficient algorithms exist when $f$ is represented as a \emph{surrogate model}, which is the traditional sensitivity analysis approach to approximate expensive function evaluations. This includes \emph{polynomial chaos expansions}~\citep{Sudret:08} and \emph{Gaussian processes}~\citep{MILR:09}, among others. Unfortunately, such surrogate models are often poorly suited to approximate Bayesian networks, which are the target of this paper.



\section{Proposed Method for Global Sensitivity Analysis in BNs} \label{sec:method}

As customary in sensitivity analysis for BNs \citep[see e.g.][]{Gomez2013}, we assume a BN $\mathcal{M}$ and a partition of its vertices into an output node $\mathbb{O}$, evidential nodes $\mathbb{E}$ and chance nodes $\mathbb{U}$, i.e. $[n]= \mathbb{O}\cup \mathbb{E}\cup \mathbb{U}$. In more details:
\begin{itemize}
	\item $Y_\mathbb{O}$ is an \emph{output variable} whose expected value is a quantity of interest.
	\item $\pmb{Y}_\mathbb{E} $ is a vector of \emph{evidential variables} whose influence on $Y_{\mathbb{O}}$ we wish to quantify.
	\item $\pmb{Y}_{\mathbb{U}} $ is a vector of \emph{chance variables} which are unobserved and cannot be fixed. 
\end{itemize}

Formally, given such a network, we wish to study the following function of interest:
\begin{align}
\begin{split} \label{eq:f}
f: \mathbb{Y}_{\mathbb{E}} &\to \mathbb{R} \\
\pmb{y}_{\mathbb{E}} &\mapsto \textnormal{E}_{\mathbb{U}}[Y_{\mathbb{O}} | \pmb{Y}_{\mathbb{E}}=\pmb{y}_{\mathbb{E}}],
\end{split}
\end{align}
where $ \textnormal{E}_{\mathbb{U}}$ refers to the expectation with respect to $\pmb{Y}_{\mathbb{U}}$. Without loss of generality we assume there is a single output variable $Y_{\mathbb{O}}$ (see the remark below). 

Specifically, our goal is to assess the \emph{global effect} of observed evidence $\pmb{Y}_{\mathbb{E}}=\pmb{y}_{\mathbb{E}}$ on the quantity of interest $f(\pmb{y}_{\mathbb{E}})$ from Equation~(\ref{eq:f}). In Sobol's variance-based sense, the effect of an evidential variable $Y_i$ is captured by two indices $S_i$ and $S^T_i$ for every $i \in \mathbb{E}$. As argued earlier, such effects are key to important tasks including model simplification, elicitation, interpretation, factor prioritization, risk analysis, etc.~\citep{SRACCGST:08}. Ours is the first method to address the case in which $f$ is defined by a probabilistic graphical model as a function of evidence in several of its nodes. Note that explicitly evaluating $f(\pmb{y}_{\mathbb{E}})$ requires inference in a BN. This operation, which bears significant computational cost for one sample $\pmb{y}_{\mathbb{E}}$, becomes quickly impractical for thousands or millions of samples, as is typically required by Monte Carlo based methods. Building a surrogate model is rarely viable and accurate enough for general BNs, and the case of correlated input variables is even more challenging. Our algorithms avoid the explicit evaluation of $f$ and, instead, exploit the network's structure to efficiently obtain all variances required in Equations~(\ref{eq:variance_component}) and~(\ref{eq:total_index}).

One must also account for the way evidential variables are distributed: they follow a joint probability mass function $\textnormal{Pr}(\pmb{y}_{\mathbb{E}}) = \mathcal{M}_{\setminus \mathbb{E}}(\pmb{y}_{\mathbb{E}})$ for $\pmb{y}_{\mathbb{E}}\in\mathbb{Y}_{\mathbb{E}}$. If all vertices $\mathbb{E}$ are root nodes (i.e. they have no parents), $\textnormal{Pr}$ is separable, which is an important particular case in the sensitivity analysis literature; in fact, many applications of the method of Sobol assume uncorrelated inputs. For example, when input values can be set at will (as in an experimental design), it is customary to assume independent, uniformly distributed inputs in a rectangle $\Omega \subset \mathbb{R}^{|\mathbb{E}|}$. The method we propose supports correlated inputs as well by means of deriving the network $\mathcal{M}_{\setminus \mathbb{E}}$, which we do via marginalization. While marginalizing the offspring of $\mathbb{E}$ is straightforward (no operations are required), marginalizing its ancestors is more computationally intensive. Therefore, our method is most useful when the evidential nodes are located in a relatively upstream position in the graph.

\paragraph*{Remark.}

Note that our formulation is flexible enough to allow for arbitrary output functions to be considered, including those that represent the \emph{utility} of a set of nodes. For example, suppose we initially have a network with nodes $\mathbb{E} \cup \mathbb{U}$ and a utility function $g: \mathbb{Y}_{\mathbb{D}} \to \mathbb{Y}_\mathbb{O}$ where $\mathbb{D} \subseteq \mathbb{E} \cup \mathbb{U}$ is a subset of the network's variables, and assume we want to study $g$'s output as a function of variables $\mathbb{E}$. To reduce this task to Equation~(\ref{eq:f}), we encode $g$ into a new node $\mathbb{O}$ that we add to the graph as in \citet{Luque2010}. The domain of $Y_{\mathbb{O}}$ is $\mathbb{Y}_\mathbb{O}$, its parents are $\pmb{Y}_\mathbb{D}$, and its conditional probability table $\textnormal{Pr}$ is built so as to reflect $g$:
\begin{equation*}
\textnormal{Pr}(y_{\mathbb{O}}|\pmb{y}_{\mathbb{D}}) = 
\begin{cases}
1 \mbox{ if } y_{\mathbb{O}} = g(\pmb{y}_{\mathbb{D}}) \\
0 \mbox{ otherwise}.
\end{cases}
\end{equation*}
This function encoding strategy is demonstrated in one of our experiments in Section~\ref{sec:simulations}.

\subsection{Network Arithmetics} \label{sec:network_arithmetics}

The proposed algorithm relies on a few building blocks for TN manipulation, which are introduced next.

\begin{definition}
\label{def:square}
	Given a TN $\mathcal{T}$ with vertex set $[n]$ and $\mathbb{A} \subseteq [n]$, the square of $\mathcal{T}$ with respect to $\mathbb{A}$ is a TN $\mathcal{T}^2$ such that $(\mathcal{T}^2)_{\setminus \mathbb{A}}(\pmb{y}_{\mathbb{A}}) = (\mathcal{T}_{\setminus \mathbb{A}}(\pmb{y}_{\mathbb{A}}))^2$ for any $ \pmb{y}_{\mathbb{A}}\in\mathbb{Y}_{\mathbb{A}}$. Since the squaring commutes with the marginalization, we may simply write $\mathcal{T}^2_{\setminus \mathbb{A}}$.
\end{definition}

\begin{proposition}
\label{prop:square}
Let $\mathbb{B} = [n] \setminus \mathbb{A}$ and $\mathcal{T}$ be a TN. The square $\mathcal{T}^2$ w.r.t. $\mathbb{A}$ can be constructed as follows:
\begin{itemize}
\item $\mathbb{V}(\mathcal{T}^2)=\mathbb{A}\cup\mathbb{B}\cup\widehat{\mathbb{B}}$ where $\mathbb{A}\cap \widehat{\mathbb{B}}= \emptyset$, $\mathbb{B}\cap \widehat{\mathbb{B}}= \emptyset$ and there exists a bijection $\eta: \widehat{\mathbb{B}} \rightarrow \mathbb{B}$;
\item $\mathbb{E}(\mathcal{T}^2)=\mathbb{E}(\mathcal{T})\cup \mathbb{E}'$ where $\mathbb{E}'$ contains the hyperedge $\{\mathbb{A}\cap \mathbb{D}\} \cup \eta(\mathbb{D}\setminus \mathbb{A})$ for every $\mathbb{D}\in\mathbb{C}$;
\item if $\mathbb{D}\subseteq \mathbb{A}\cup\mathbb{B}$ then $\phi_{\mathcal{T}^2}(\pmb{y}_\mathbb{D})=\phi_{\mathcal{T}}(\pmb{y}_\mathbb{D})$, otherwise $\mathbb{D}\subseteq \mathbb{A}\cup\widehat{\mathbb{B}}$ and $\phi_{\mathcal{T}^2}(\pmb{y}_\mathbb{D})= \phi_{\mathcal{T}}(\pmb{y}_{C\cap\eta(\widehat{\mathbb{B}})},\pmb{y}_{\mathbb{D}\cap \mathbb{A}})$.
\end{itemize}
\end{proposition}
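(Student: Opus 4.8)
The plan is to verify the defining identity of Definition~\ref{def:square} by a direct computation: I would expand the square $\bigl(\mathcal{T}_{\setminus\mathbb{A}}(\pmb{y}_{\mathbb{A}})\bigr)^2$ as a double summation over two independent copies of the elimination variables, relabel the second copy through the bijection $\eta$, and then read off that the result is exactly the marginalization of the network $\mathcal{T}^2$ constructed in the statement. There is no analysis involved; it is a matter of unfolding $\bigl(\sum a\bigr)^2=\sum\sum a\,a'$ and bookkeeping how each potential is relabeled.

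Concretely, set $\mathbb{B}=[n]\setminus\mathbb{A}$ and $\mathbb{C}=\mathbb{E}(\mathcal{T})$. By the definition of variable elimination, for every fixed $\pmb{y}_{\mathbb{A}}\in\mathbb{Y}_{\mathbb{A}}$,
\[
\mathcal{T}_{\setminus\mathbb{A}}(\pmb{y}_{\mathbb{A}}) \;=\; \sum_{\pmb{y}_{\mathbb{B}}\in\mathbb{Y}_{\mathbb{B}}}\ \prod_{\mathbb{D}\in\mathbb{C}}\phi_{\mathcal{T}}(\pmb{y}_{\mathbb{D}}),
\]
so, introducing a second dummy assignment $\pmb{y}'$ that coincides with $\pmb{y}$ on $\mathbb{A}$ and ranges freely over $\mathbb{Y}_{\mathbb{B}}$ on $\mathbb{B}$,
\[
\bigl(\mathcal{T}_{\setminus\mathbb{A}}(\pmb{y}_{\mathbb{A}})\bigr)^2 \;=\; \sum_{\pmb{y}_{\mathbb{B}}\in\mathbb{Y}_{\mathbb{B}}}\ \sum_{\pmb{y}'_{\mathbb{B}}\in\mathbb{Y}_{\mathbb{B}}}\ \Bigl(\prod_{\mathbb{D}\in\mathbb{C}}\phi_{\mathcal{T}}(\pmb{y}_{\mathbb{D}})\Bigr)\Bigl(\prod_{\mathbb{D}\in\mathbb{C}}\phi_{\mathcal{T}}(\pmb{y}'_{\mathbb{D}})\Bigr).
\]

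Next I would re-index the inner sum through $\eta\colon\widehat{\mathbb{B}}\to\mathbb{B}$, identifying $\pmb{y}'_{\mathbb{B}}$ with an assignment $\pmb{y}_{\widehat{\mathbb{B}}}\in\mathbb{Y}_{\widehat{\mathbb{B}}}$ via $y'_{\eta(k)}=y_k$ (so the mirror variable $Y_k$ inherits the sample space $\mathbb{Y}_k=\mathbb{Y}_{\eta(k)}$). Under this relabeling, for each $\mathbb{D}\in\mathbb{C}$ the factor $\phi_{\mathcal{T}}(\pmb{y}'_{\mathbb{D}})$ depends on the variables of $\mathcal{T}^2$ precisely through $\widehat{\mathbb{D}}:=(\mathbb{D}\cap\mathbb{A})\cup\eta^{-1}(\mathbb{D}\setminus\mathbb{A})$ and coincides with the potential $\phi_{\mathcal{T}^2}(\pmb{y}_{\widehat{\mathbb{D}}})$ prescribed for the mirror hyperedge $\widehat{\mathbb{D}}\in\mathbb{E}'$; likewise $\phi_{\mathcal{T}}(\pmb{y}_{\mathbb{D}})=\phi_{\mathcal{T}^2}(\pmb{y}_{\mathbb{D}})$ for the original hyperedge $\mathbb{D}\in\mathbb{E}(\mathcal{T})\subseteq\mathbb{E}(\mathcal{T}^2)$. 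Hence the double sum equals $\sum_{(\pmb{y}_{\mathbb{B}},\,\pmb{y}_{\widehat{\mathbb{B}}})}\ \prod_{\mathbb{D}\in\mathbb{E}(\mathcal{T}^2)}\phi_{\mathcal{T}^2}(\pmb{y}_{\mathbb{D}})$, which is exactly the elimination of $\pmb{Y}_{\mathbb{B}\cup\widehat{\mathbb{B}}}=\pmb{Y}_{\mathbb{V}(\mathcal{T}^2)\setminus\mathbb{A}}$ from $\mathcal{T}^2$, i.e.\ $(\mathcal{T}^2)_{\setminus\mathbb{A}}(\pmb{y}_{\mathbb{A}})$. Since this holds for every $\pmb{y}_{\mathbb{A}}$, the commutation of squaring with marginalization asserted in Definition~\ref{def:square} follows, which justifies the notation $\mathcal{T}^2_{\setminus\mathbb{A}}$.

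The delicate part is the combinatorial bookkeeping rather than any single estimate. The disjointness requirements $\mathbb{A}\cap\widehat{\mathbb{B}}=\mathbb{B}\cap\widehat{\mathbb{B}}=\emptyset$ together with the existence of $\eta$ are exactly what makes the relabeling above well defined and keeps the two copies of the elimination variables genuinely independent in the double sum, so these hypotheses must actually be invoked. The step I expect to need the most care is the treatment of \emph{degenerate} hyperedges $\mathbb{D}\subseteq\mathbb{A}$: for such $\mathbb{D}$ the mirror hyperedge $\widehat{\mathbb{D}}$ equals $\mathbb{D}$ itself, whereas $\phi_{\mathcal{T}}(\pmb{y}_{\mathbb{D}})$ is constant with respect to the marginalization and therefore enters $\bigl(\mathcal{T}_{\setminus\mathbb{A}}\bigr)^2$ squared; to keep the identity exact one should read $\mathbb{E}(\mathcal{T}^2)=\mathbb{E}(\mathcal{T})\cup\mathbb{E}'$ as a multiset union (equivalently, factor such constant potentials out of the sum and square them separately before running the argument above). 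With that convention fixed, the term-by-term matching of the two sides is routine.
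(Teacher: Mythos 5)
Your proposal is correct and follows essentially the same route as the paper's proof: expand $\bigl(\mathcal{T}_{\setminus\mathbb{A}}(\pmb{y}_{\mathbb{A}})\bigr)^2$ into a double sum over two independent copies of $\pmb{y}_{\mathbb{B}}$, relabel the second copy through $\eta$ into the mirror variables $\widehat{\mathbb{B}}$, and recognize the result as the marginalization of $\mathcal{T}^2$ over $\mathbb{B}\cup\widehat{\mathbb{B}}$ (the paper justifies the final regrouping by noting that no hyperedge of $\mathcal{T}^2$ mixes $\mathbb{B}$ with $\widehat{\mathbb{B}}$, exactly as you do). Your extra remark on hyperedges $\mathbb{D}\subseteq\mathbb{A}$ needing a multiset reading of $\mathbb{E}(\mathcal{T})\cup\mathbb{E}'$ is a fair point of care that the paper passes over silently, but it does not change the argument.
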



\proof{Proof.}
For any $\pmb{y}_{\mathbb{A}}\in\mathbb{Y}_{\mathbb{A}}$ we have
\begin{align}
(\mathcal{T}_{\setminus \mathbb{A}}(\pmb{y}_{\mathbb{A}}))^2 &=\left( \sum_{\pmb{y}_{\mathbb{B}}\in\mathbb{Y}_{\mathbb{B}}}\prod_{\mathbb{D}\in\mathcal{P}(\mathbb{V}(\mathcal{T}))}\phi_{\mathcal{T}}(\pmb{y}_{\mathbb{A}\cap\mathbb{D}},\pmb{y}_{\mathbb{B}\cap\mathbb{D}})\right)^2\nonumber \\
&= \sum_{\pmb{y}_{\mathbb{B}},\pmb{y}_{\mathbb{B}}'\in\mathbb{Y}_{\mathbb{B}}}\prod_{\mathbb{D}\in\mathcal{P}(\mathbb{V}(\mathcal{T}))}\phi_{\mathcal{T}}(\pmb{y}_{\mathbb{A}\cap\mathbb{D}},\pmb{y}_{\mathbb{B}\cap\mathbb{D}})\phi_{\mathcal{T}}(\pmb{y}_{\mathbb{A}\cap\mathbb{D}},\pmb{y}_{\mathbb{B}\cap\mathbb{D}}')\nonumber\\
&= \sum_{\pmb{y}_{\mathbb{B}},\pmb{y}_{\mathbb{B}}'\in\mathbb{Y}_{\mathbb{B}}}\prod_{\mathbb{D}\in\mathcal{P}(\mathbb{V}(\mathcal{T}))}\phi_{\mathcal{T}^2}(\pmb{y}_{\mathbb{A}\cap\mathbb{D}},\pmb{y}_{\mathbb{B}\cap\mathbb{D}})\phi_{\mathcal{T}^2}(\pmb{y}_{\mathbb{A}\cap\mathbb{D}},\pmb{y}_{\mathbb{B}\cap\eta(\widehat{\mathbb{D}})}')\nonumber\\
& = \sum_{\pmb{y}_{\mathbb{B}\cup \widehat{\mathbb{B}}}\in \mathbb{Y}_{\mathbb{B}\cup \widehat{\mathbb{B}}}} \prod_{\mathbb{D}\in\mathcal{P}(\mathbb{V}(\mathcal{T}^2))} \phi_{\mathcal{T}^2}(\pmb{y}_{\mathbb{A}\cap \mathbb{D}}, \pmb{y}_{(\mathbb{B}\cup \widehat{\mathbb{B}})\cap \mathbb{D}}) \label{eq:proof1}
\end{align}
where the last equality holds since $\mathbb{E}(\mathcal{T}^2)$ does not include any element with vertices in both $\mathbb{B}$ and $\widehat{\mathbb{B}}$. The result easily follows by noticing that the r.h.s. of Equation (\ref{eq:proof1}) is exactly $\mathcal{T}^2_{\setminus \mathbb{A}}(\pmb{y}_{\mathbb{A}})$. \hfill \Halmos
\endproof

Figure \ref{fig:square} illustrates the squaring procedure of Proposition \ref{prop:square}. A TN $\mathcal{T}$ has hypergraph with vertex set $[5]$ and hyperedges $\{1,2,4\}$, $\{1,3,4\}$ and $\{3,4,5\}$. This can be represented by the undirected graph in Figure \ref{fig:square} (left). Suppose that we are interested in the squared TN $\mathcal{T}^2$ w.r.t. $\{1,2\}$, which, using Proposition \ref{prop:square}, can be constructed as the one in Figure \ref{fig:square} (right). The idea is related to a known algorithm to compute the dot product between two tensor networks $\mathcal{T}$ and $\mathcal{T}'$ by contracting all nodes to yield a scalar $\left< \mathcal{T}, \mathcal{T}' \right>$~\citep{LC:17}. Our version sets $\mathcal{T} = \mathcal{T}'$ and skips the node contraction part.

\begin{figure}
\FIGURE{
\centering
\begin{tikzpicture}
\renewcommand{\xx}{2}
\renewcommand{\yy}{1.5}
\node (1) at (0*\xx,0*\yy){\stages{white}{1}};
\node (2) at (0*\xx,1*\yy){\stages{white}{2}};
\node (3) at (1*\xx,0*\yy){\stages{white}{3}};
\node (4) at (1*\xx,1*\yy){\stages{white}{4}};
\node (5) at (2*\xx,0.5*\yy){\stages{white}{5}};
\draw[line width = 1.1pt] (1) -- (3);
\draw[line width = 1.1pt] (3) -- (4);
\draw[line width = 1.1pt] (1) -- (2);
\draw[line width = 1.1pt] (2) -- (4);
\draw[line width = 1.1pt] (3) -- (5);
\draw[line width = 1.1pt] (4) -- (5);
\draw[line width = 1.1pt] (1) -- (4);
\end{tikzpicture}
\hspace{2cm}
\begin{tikzpicture}
\renewcommand{\xx}{2}
\renewcommand{\yy}{1.5}
\node (1) at (0*\xx,0*\yy){\stages{gray}{1}};
\node (2) at (0*\xx,1*\yy){\stages{gray}{2}};
\node (3) at (1*\xx,0*\yy){\stages{white}{3}};
\node (4) at (1*\xx,1*\yy){\stages{white}{4}};
\node (5) at (2*\xx,0.5*\yy){\stages{white}{5}};
\node (3b) at (-1*\xx,0*\yy){\stages{white}{\widehat{3}}};
\node (4b) at (-1*\xx,1*\yy){\stages{white}{\widehat{4}}};
\node (5b) at (-2*\xx,0.5*\yy){\stages{white}{\widehat{5}}};
\draw[line width = 1.1pt] (1) -- (3);
\draw[line width = 1.1pt] (3) -- (4);
\draw[line width = 1.1pt] (1) -- (2);
\draw[line width = 1.1pt] (2) -- (4);
\draw[line width = 1.1pt] (3) -- (5);
\draw[line width = 1.1pt] (4) -- (5);
\draw[line width = 1.1pt] (1) -- (4);
\draw[line width = 1.1pt] (1) -- (3b);
\draw[line width = 1.1pt] (2) -- (4b);
\draw[line width = 1.1pt] (3b) -- (4b);
\draw[line width = 1.1pt] (3b) -- (5b);
\draw[line width = 1.1pt] (4b) -- (5b);
\draw[line width = 1.1pt] (1) -- (4b);
\end{tikzpicture}
}
{Squaring process for the tensor network $\mathcal{T}$ (left) with respect to $\{1,2\}$ (right). \label{fig:square}}
{}
\end{figure}

\begin{definition}
	Let $\mathcal{T}$ and $\mathcal{T}'$ be two TNs for $\pmb{Y}$ whose hypergraphs have the same vertex set $[n]$. The quotient of $\mathcal{T}$ over $\mathcal{T}'$ denoted as $\mathcal{T}/\mathcal{T}'$ is such that $(\mathcal{T}/\mathcal{T}')(\pmb{y}) = \mathcal{T}(\pmb{y}) / \mathcal{T'}(\pmb{y})$ for any $\pmb{y} \in \mathbb{Y}$.
\end{definition}

\begin{proposition} \label{prop:network_quotient}
	The quotient $\mathcal{T}/\mathcal{T}'$ is a TN whose hypergraph has vertex set $[n]$ and hyperedges $\mathbb{E}(\mathcal{T})\cup\mathbb{E}(\mathcal{T}')$, and potentials $\phi_{\mathcal{T}/\mathcal{T}'}(\pmb{y})=\phi_{\mathcal{T}}(\pmb{y})/\phi_{\mathcal{T}'}(\pmb{y})$ for any $\pmb{y} \in \mathbb{Y}$.

\end{proposition}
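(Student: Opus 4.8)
The plan is to verify directly that the proposed TN, call it $\mathcal{Q}$, with vertex set $[n]$, hyperedge set $\mathbb{E}(\mathcal{T})\cup\mathbb{E}(\mathcal{T}')$, and potentials $\phi_{\mathcal{Q}}(\pmb{y}_C)$ defined by $\phi_{\mathcal{T}}(\pmb{y}_C)/\phi_{\mathcal{T}'}(\pmb{y}_C)$ on the relevant hyperedges, evaluates to $\mathcal{T}(\pmb{y})/\mathcal{T}'(\pmb{y})$ at every $\pmb{y}\in\mathbb{Y}$. Since a TN (unlike an MRF) carries no normalizing constant, its value at $\pmb{y}$ is simply the product of all its potentials evaluated at $\pmb{y}$, where any subset not in the hyperedge set contributes the trivial factor $1$. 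So the statement to check is the pointwise identity $\prod_{C}\phi_{\mathcal{Q}}(\pmb{y}_C) = \big(\prod_{C}\phi_{\mathcal{T}}(\pmb{y}_C)\big)\big/\big(\prod_{C}\phi_{\mathcal{T}'}(\pmb{y}_C)\big)$, the products ranging over $\mathcal{P}([n])$.

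First I would fix $\pmb{y}\in\mathbb{Y}$ and expand the right-hand side using the definition of $\mathcal{T}(\pmb{y})$ and $\mathcal{T}'(\pmb{y})$ as products of potentials over $\mathcal{P}([n])$, noting that only hyperedges in $\mathbb{E}(\mathcal{T})$ (resp. $\mathbb{E}(\mathcal{T}')$) contribute nontrivially. Then I would split the product over $C\in\mathbb{E}(\mathcal{T})\cup\mathbb{E}(\mathcal{T}')$ into three disjoint groups — $C\in\mathbb{E}(\mathcal{T})\setminus\mathbb{E}(\mathcal{T}')$, $C\in\mathbb{E}(\mathcal{T}')\setminus\mathbb{E}(\mathcal{T})$, and $C\in\mathbb{E}(\mathcal{T})\cap\mathbb{E}(\mathcal{T}')$ — and rewrite $\phi_{\mathcal{Q}}(\pmb{y}_C)$ on each group: on the first group it equals $\phi_{\mathcal{T}}(\pmb{y}_C)$ (since $\phi_{\mathcal{T}'}(\pmb{y}_C)=1$ there by the TN convention), on the second it equals $1/\phi_{\mathcal{T}'}(\pmb{y}_C)$, and on the intersection it equals $\phi_{\mathcal{T}}(\pmb{y}_C)/\phi_{\mathcal{T}'}(\pmb{y}_C)$. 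Multiplying the three groups back together and recombining with the (trivial) factors outside $\mathbb{E}(\mathcal{Q})$ recovers $\mathcal{T}(\pmb{y})/\mathcal{T}'(\pmb{y})$ exactly. The final step is to observe that $\mathcal{Q}$ indeed satisfies the format of Definition (TN), i.e. it is an MRF-style object with real-valued potentials indexed by the stated hypergraph, so it is a legitimate TN and the construction is complete.

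The only genuine subtlety — and the step I would flag as the main obstacle — is the division: the quotient is only well-defined where $\mathcal{T}'(\pmb{y})\ne 0$, and the factorwise definition $\phi_{\mathcal{T}/\mathcal{T}'}=\phi_{\mathcal{T}}/\phi_{\mathcal{T}'}$ additionally requires each individual potential $\phi_{\mathcal{T}'}(\pmb{y}_C)$ to be nonzero, which is a strictly stronger condition than $\mathcal{T}'(\pmb{y})\ne 0$. I would therefore state the identity under the hypothesis that $\phi_{\mathcal{T}'}(\pmb{y}_C)\neq 0$ for all relevant $C$ (equivalently, restrict attention to assignments on which no potential of $\mathcal{T}'$ vanishes), which is the regime in which the quotient network is actually used later in the paper, and note that on any larger domain the quotient would have to be defined by a limiting or case-based convention. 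Everything else is a routine regrouping of a finite product, so no further difficulty arises.
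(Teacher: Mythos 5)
Your proposal is correct and follows essentially the same route as the paper: a direct pointwise regrouping of the products of potentials over $\mathcal{P}([n])$, with trivial hyperedges contributing factors of $1$, which is exactly the one-line computation in the paper's proof. Your extra caveat that the factorwise quotient requires each $\phi_{\mathcal{T}'}(\pmb{y}_C)\neq 0$ is a reasonable observation that the paper leaves implicit, but it does not change the argument.
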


\proof{Proof.}
	For any $\pmb{y} \in \mathbb{Y}$ we have
\[
		\frac{\mathcal{T}(\pmb{y})}{\mathcal{T}'(\pmb{y})} = \frac{\prod_{\mathbb{A}\in\mathcal{P}(\mathbb{V}(\mathcal{T}))} \phi_{\mathcal{T}}(\pmb{y}_\mathbb{A})}{\prod_{\mathbb{A}\in\mathcal{P}(\mathbb{V}(\mathcal{T'}))} \phi_{\mathcal{T}'}(\pmb{y}_\mathbb{A})} = \prod_{\mathbb{A}\in\mathcal{P}(\mathbb{V}(\mathcal{T}))}\frac{\phi_{\mathcal{T}}(\pmb{y}_{\mathbb{A}})}{\phi_{\mathcal{T}'}(\pmb{y}_{\mathbb{A}})} = \prod_{{\mathbb{A}\in\mathcal{P}(\mathbb{V}(\mathcal{T}/\mathcal{T}'))}} \phi_{\mathcal{T}/\mathcal{T}'}(\pmb{y}_{\mathbb{A}}) = (\mathcal{T}/\mathcal{T}')(\pmb{y}) \hfill\Halmos
\]
	
\endproof

\begin{proposition} \label{prop:quotient_projection}
	Let $\mathbb{A}\subset [n]$. Then, $(\mathcal{T}/\mathcal{T}_{\mathbb{A}}')_{\mathbb{A}} = \mathcal{T}_{\mathbb{A}} / \mathcal{T}_{\mathbb{A}}'$. In addition, $(\mathcal{T}/\mathcal{T}_{\mathbb{A}}')^2_{\mathbb{A}} = (\mathcal{T}_{\mathbb{A}})^2 / (\mathcal{T}_{\mathbb{A}}')^2$, where the square is w.r.t. $\setminus\mathbb{A}$.
\end{proposition}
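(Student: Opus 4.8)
The plan is to obtain both identities by unwinding the definitions of the network quotient (Proposition~\ref{prop:network_quotient}), of marginalization, and of the square (Definition~\ref{def:square}), proving the first identity first and then deducing the second from it. Throughout, write $\mathbb{B} = [n] \setminus \mathbb{A}$ (the set the statement also denotes $\setminus\mathbb{A}$). The crucial observation for the first identity is that $\mathcal{T}'_{\mathbb{A}}$, being the result of eliminating $\pmb{Y}_{\mathbb{A}}$ from $\mathcal{T}'$, computes a function of $\pmb{y}_{\mathbb{B}}$ alone; to form the quotient $\mathcal{T}/\mathcal{T}'_{\mathbb{A}}$ as Proposition~\ref{prop:network_quotient} requires, I would regard $\mathcal{T}'_{\mathbb{A}}$ as a TN on the full vertex set $[n]$ by adjoining the vertices $\mathbb{A}$ with constant potential $1$, which does not change its value.

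Then, for every $\pmb{y}_{\mathbb{B}} \in \mathbb{Y}_{\mathbb{B}}$, I would compute
\begin{align*}
(\mathcal{T}/\mathcal{T}'_{\mathbb{A}})_{\mathbb{A}}(\pmb{y}_{\mathbb{B}}) &= \sum_{\pmb{y}_{\mathbb{A}} \in \mathbb{Y}_{\mathbb{A}}} \frac{\mathcal{T}(\pmb{y}_{\mathbb{A}}, \pmb{y}_{\mathbb{B}})}{\mathcal{T}'_{\mathbb{A}}(\pmb{y}_{\mathbb{B}})} \\
&= \frac{1}{\mathcal{T}'_{\mathbb{A}}(\pmb{y}_{\mathbb{B}})} \sum_{\pmb{y}_{\mathbb{A}} \in \mathbb{Y}_{\mathbb{A}}} \mathcal{T}(\pmb{y}_{\mathbb{A}}, \pmb{y}_{\mathbb{B}}) = \frac{\mathcal{T}_{\mathbb{A}}(\pmb{y}_{\mathbb{B}})}{\mathcal{T}'_{\mathbb{A}}(\pmb{y}_{\mathbb{B}})},
\end{align*}
where the first step combines Proposition~\ref{prop:network_quotient} with the definition of marginalization, the second pulls the $\pmb{y}_{\mathbb{A}}$-free factor $\mathcal{T}'_{\mathbb{A}}(\pmb{y}_{\mathbb{B}})$ out of the sum, and the third recognizes the remaining sum as $\mathcal{T}_{\mathbb{A}}(\pmb{y}_{\mathbb{B}})$. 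Since both sides are TNs on $\mathbb{B}$ that agree as functions on $\mathbb{Y}_{\mathbb{B}}$, and the right-hand side is $\mathcal{T}_{\mathbb{A}}/\mathcal{T}'_{\mathbb{A}}$ by Proposition~\ref{prop:network_quotient}, this establishes $(\mathcal{T}/\mathcal{T}'_{\mathbb{A}})_{\mathbb{A}} = \mathcal{T}_{\mathbb{A}}/\mathcal{T}'_{\mathbb{A}}$.

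For the second identity, set $\mathcal{S} = \mathcal{T}/\mathcal{T}'_{\mathbb{A}}$ and apply Definition~\ref{def:square} to $\mathcal{S}$ with respect to $\mathbb{B}$: the square $\mathcal{S}^2$ satisfies $(\mathcal{S}^2)_{\setminus\mathbb{B}}(\pmb{y}_{\mathbb{B}}) = (\mathcal{S}_{\setminus\mathbb{B}}(\pmb{y}_{\mathbb{B}}))^2$, where the marginalization on each side brings the network down to $\mathbb{B}$. For the TN $\mathcal{S}$ on $[n]$ this is just deletion of $\pmb{Y}_{\mathbb{A}}$, so $\mathcal{S}_{\setminus\mathbb{B}} = \mathcal{S}_{\mathbb{A}}$, which the first identity identifies with $\mathcal{T}_{\mathbb{A}}/\mathcal{T}'_{\mathbb{A}}$. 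Hence $(\mathcal{S}^2)_{\mathbb{A}}(\pmb{y}_{\mathbb{B}}) = \big(\mathcal{T}_{\mathbb{A}}(\pmb{y}_{\mathbb{B}})/\mathcal{T}'_{\mathbb{A}}(\pmb{y}_{\mathbb{B}})\big)^2 = (\mathcal{T}_{\mathbb{A}}(\pmb{y}_{\mathbb{B}}))^2/(\mathcal{T}'_{\mathbb{A}}(\pmb{y}_{\mathbb{B}}))^2$. On the other side $(\mathcal{T}_{\mathbb{A}})^2/(\mathcal{T}'_{\mathbb{A}})^2$, each square is taken with respect to the entire vertex set $\mathbb{B}$, for which Definition~\ref{def:square} reduces to ordinary pointwise squaring since nothing is marginalized away; Proposition~\ref{prop:network_quotient} then shows this quotient also computes $(\mathcal{T}_{\mathbb{A}}(\pmb{y}_{\mathbb{B}}))^2/(\mathcal{T}'_{\mathbb{A}}(\pmb{y}_{\mathbb{B}}))^2$, so the two networks coincide.

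The argument needs no new ideas beyond the earlier propositions; the only care required — the main obstacle, such as it is — is notational bookkeeping: keeping straight that $\mathcal{T}'_{\mathbb{A}}$ is constant in $\pmb{y}_{\mathbb{A}}$ (the fact that legitimizes the factor-out step and is really the heart of the first identity), and reading ``square with respect to $\setminus\mathbb{A}$, then $\mathbb{A}$-subscript'' as squaring and then marginalizing \emph{down to} $\mathbb{B} = [n]\setminus\mathbb{A}$ — in line with the convention $\mathcal{T}^2_{\setminus\mathbb{A}}$ introduced right after Definition~\ref{def:square} — rather than literally removing only the original $\pmb{Y}_{\mathbb{A}}$ vertices from the enlarged squared network.
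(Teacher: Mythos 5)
Your proof is correct and follows essentially the same route as the paper: the first identity is established by the identical computation of pulling the $\pmb{y}_{\mathbb{A}}$-free factor $\mathcal{T}'_{\mathbb{A}}(\pmb{y}_{\setminus\mathbb{A}})$ out of the marginalization sum, and the second is deduced from the first via Definition~\ref{def:square}, which the paper states as following ``immediately'' and you merely spell out in more detail. The extra bookkeeping remarks (viewing $\mathcal{T}'_{\mathbb{A}}$ as a TN on $[n]$ with unit potentials, and noting that squaring w.r.t.\ the full vertex set is pointwise squaring) are harmless elaborations, not a different argument.
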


\proof{Proof.}
	For the first statement and any $\pmb{y}_{\setminus\mathbb{A}} = \mathbb{Y}_{\setminus\mathbb{A}}$,
\begin{align*}
		(\mathcal{T}/\mathcal{T}_{\mathbb{A}}')_{\mathbb{A}}(\pmb{y}_{\setminus\mathbb{A}}) & = \sum_{\pmb{y}_{\mathbb{A}}\in\mathbb{Y}_{\mathbb{A}}} (\mathcal{T}(\pmb{y}_{\mathbb{A}}, \pmb{y}_{\setminus \mathbb{A}}) / \mathcal{T}_{\mathbb{A}}'(\pmb{y}_{\setminus
\mathbb{A}}))\\
& = \left( \sum_{\pmb{y}_{\mathbb{A}}\in\mathbb{Y}_{\mathbb{A}}} \mathcal{T}(\pmb{y}_{\mathbb{A}}, \pmb{y}_{\setminus \mathbb{A}}) \right) / \mathcal{T}_{\mathbb{A}}'(\pmb{y}_{\setminus \mathbb{A}}) = \mathcal{T}_{\mathbb{A}}(\pmb{y}_{\setminus\mathbb{A}}) / \mathcal{T}_{\mathbb{A}}'(\pmb{y}_{\setminus\mathbb{A}}).
\end{align*}
	The second statement follows immediately from the above and Definition \ref{def:square}. \hfill\Halmos
\endproof

\subsection{Representing the Target Function}
The quotient operation is also useful to build a TN representation of the function of interest $f$.

\begin{definition} \label{def:function_tn}
Let $\mathcal{B}$ be our starting point BN, i.e. with a vertex set that decomposes into evidential, chance and output nodes: $[n] = \E \cup \U \cup \Oo$. Let $\mathcal{M}$ be the MRF derived from $\mathcal{B}$ (Prop.~\ref{prop:eq}). The \emph{function TN} of $\mathcal{M}$ is the tensor network that results by i. dropping the non-negativity condition on $\mathcal{M}$; ii. adding the potential $\phi(y_\Oo)= y_\Oo$.
\end{definition}

\begin{proposition}\label{prop:f}
Let $\mathcal{T}$ be the function TN of the MRF $\mathcal{M}$ (Def.~\ref{def:function_tn}). Then, the function of interest $f$ (Equation~\ref{eq:f}) equals $(\mathcal{T} / \mathcal{M}_{\mathbb{U}\cup\mathbb{O}})_{\mathbb{U}\cup\mathbb{O}} = \mathcal{T}_{\mathbb{U}\cup\mathbb{O}} / \mathcal{M}_{\mathbb{U}\cup\mathbb{O}}$. 
\end{proposition}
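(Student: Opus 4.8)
The asserted identity has two halves, and the second one comes essentially for free: $(\mathcal{T} / \mathcal{M}_{\mathbb{U}\cup\mathbb{O}})_{\mathbb{U}\cup\mathbb{O}} = \mathcal{T}_{\mathbb{U}\cup\mathbb{O}} / \mathcal{M}_{\mathbb{U}\cup\mathbb{O}}$ is exactly the first statement of Proposition~\ref{prop:quotient_projection} applied with $\mathbb{A} = \mathbb{U}\cup\mathbb{O}$ and $\mathcal{T}' = \mathcal{M}$. So the substance of the argument is to show that, for every $\pmb{y}_{\mathbb{E}} \in \mathbb{Y}_{\mathbb{E}}$,
\[ \frac{\mathcal{T}_{\mathbb{U}\cup\mathbb{O}}(\pmb{y}_{\mathbb{E}})}{\mathcal{M}_{\mathbb{U}\cup\mathbb{O}}(\pmb{y}_{\mathbb{E}})} = f(\pmb{y}_{\mathbb{E}}) = \textnormal{E}_{\mathbb{U}}[Y_{\mathbb{O}} \mid \pmb{Y}_{\mathbb{E}} = \pmb{y}_{\mathbb{E}}] . \]
The plan is to expand numerator and denominator as explicit marginalization sums over products of the potentials of $\mathcal{M}$, and then to recognize the quotient as a conditional expectation.

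\textbf{Step 1 (rewrite both networks via $\textnormal{Pr}$).} By Proposition~\ref{prop:eq}, the MRF $\mathcal{M}$ derived from $\mathcal{B}$ satisfies $\prod_{C \in \mathbb{C}_{\mathcal{M}}} \phi_{\mathcal{M}}(\pmb{y}_C) = z\,\textnormal{Pr}(\pmb{y})$ for a fixed constant $z > 0$ (for a derived MRF in fact $z = 1$, but I will not need this). By Definition~\ref{def:function_tn}, $\mathcal{T}$ carries the same potentials as $\mathcal{M}$ together with the extra potential $\phi(y_{\mathbb{O}}) = y_{\mathbb{O}}$ attached to the single, real-valued output variable; hence $\mathcal{T}(\pmb{y}) = z\, y_{\mathbb{O}}\, \textnormal{Pr}(\pmb{y})$. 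Marginalizing $\pmb{Y}_{\mathbb{U}\cup\mathbb{O}}$ (the subscript-without-backslash operation) then gives $\mathcal{M}_{\mathbb{U}\cup\mathbb{O}}(\pmb{y}_{\mathbb{E}}) = z\sum_{\pmb{y}_{\mathbb{U}}, y_{\mathbb{O}}} \textnormal{Pr}(\pmb{y}_{\mathbb{E}}, \pmb{y}_{\mathbb{U}}, y_{\mathbb{O}}) = z\,\textnormal{Pr}(\pmb{y}_{\mathbb{E}})$, and $\mathcal{T}_{\mathbb{U}\cup\mathbb{O}}(\pmb{y}_{\mathbb{E}}) = z\sum_{\pmb{y}_{\mathbb{U}}, y_{\mathbb{O}}} y_{\mathbb{O}}\, \textnormal{Pr}(\pmb{y}_{\mathbb{E}}, \pmb{y}_{\mathbb{U}}, y_{\mathbb{O}})$ — the first being an ordinary marginal, the second obtained because the extra factor $y_{\mathbb{O}}$ simply rides through the summation.

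\textbf{Step 2 (take the quotient).} Dividing the two expressions — legitimate whenever $\textnormal{Pr}(\pmb{y}_{\mathbb{E}}) > 0$, as the quotient construction tacitly assumes — the factor $z$ cancels and one is left with $\sum_{\pmb{y}_{\mathbb{U}}, y_{\mathbb{O}}} y_{\mathbb{O}}\, \textnormal{Pr}(\pmb{y}_{\mathbb{U}}, y_{\mathbb{O}} \mid \pmb{y}_{\mathbb{E}})$, which is precisely $\textnormal{E}[Y_{\mathbb{O}} \mid \pmb{Y}_{\mathbb{E}} = \pmb{y}_{\mathbb{E}}]$, i.e. $f(\pmb{y}_{\mathbb{E}})$ as given in Equation~(\ref{eq:f}). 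Combining this with the instance of Proposition~\ref{prop:quotient_projection} recorded above closes the argument.

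\textbf{Main obstacle.} There is no deep difficulty here; the proof is a careful unwinding of the paper's conventions. The one point that genuinely requires attention is the normalization: the paper works throughout with \emph{un-normalized} marginals $\mathcal{M}_{\setminus\mathbb{A}}$, so one must verify that the constant linking $\prod_{C \in \mathbb{C}_{\mathcal{M}}}\phi_{\mathcal{M}}(\pmb{y}_C)$ to $\textnormal{Pr}(\pmb{y})$ does not leak into the final formula — and it does not, because it occurs identically in numerator and denominator and divides out. A secondary, purely notational point is to read $\textnormal{E}_{\mathbb{U}}[Y_{\mathbb{O}} \mid \pmb{Y}_{\mathbb{E}} = \pmb{y}_{\mathbb{E}}]$ in Equation~(\ref{eq:f}) as the conditional expectation obtained by summing out everything except $\pmb{Y}_{\mathbb{E}}$ (both $\pmb{Y}_{\mathbb{U}}$ and any residual stochasticity of $Y_{\mathbb{O}}$ given its parents), which is exactly the sum produced in Step~2; with that reading the claim is essentially definitional.
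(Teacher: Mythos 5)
Your proposal is correct and follows essentially the same route as the paper's proof: both arguments reduce to the observations that $\mathcal{T}(\pmb{y}) = y_{\mathbb{O}}\,\mathcal{M}(\pmb{y})$ (up to the normalization), that marginalizing over $\mathbb{U}\cup\mathbb{O}$ and dividing by $\mathcal{M}_{\mathbb{U}\cup\mathbb{O}}$ produces exactly the conditional expectation defining $f$, and that the second equality is an instance of Proposition~\ref{prop:quotient_projection}. The only differences are cosmetic — you run the chain of equalities from the network expression toward $f$ rather than the reverse, and you track the normalizing constant $z$ explicitly where the paper implicitly uses $\mathcal{M}(\pmb{y}) = \textnormal{Pr}(\pmb{y})$.
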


\proof{Proof.}
	For any $\y_\mathbb{E} \in \mathbb{Y}_{\mathbb{E}}$ we have
	\begin{align*}
		f(\pmb{y}_{\mathbb{E}}) &= \ex_{\mathbb{U}\cup\mathbb{O}}[ Y_{\mathbb{O}} | \pmb{Y}_{\mathbb{E}} = \pmb{y}_{\mathbb{E}}] = \sum_{\pmb{y}_{\mathbb{O}\cup\mathbb{U}}\in\mathbb{Y}_{\mathbb{O}\cup\mathbb{U}}} y_{\mathbb{O}} \cdot \p(\pmb{y}_{\mathbb{E}},\pmb{y}_{\mathbb{U}\cup\mathbb{O}}) / \p(\pmb{y}_{\mathbb{E}})\\
& =\sum_{\pmb{y}_{\mathbb{O}\cup\mathbb{U}}\in\mathbb{Y}_{\mathbb{O}\cup\mathbb{U}}} y_{\mathbb{O}} \cdot \mathcal{M}(\pmb{y}_{\mathbb{E}},\pmb{y}_{\mathbb{U}\cup\mathbb{O}}) / \mathcal{M}_{\mathbb{U}\cup\mathbb{O}}(\pmb{y}_{\mathbb{E}}) 
		= \sum_{\pmb{y}_{\mathbb{O}\cup\mathbb{U}}\in\mathbb{Y}_{\mathbb{O}\cup\mathbb{U}}}  \mathcal{T}(\pmb{y}_{\mathbb{E}},\pmb{y}_{\mathbb{U}\cup\mathbb{O}}) / \mathcal{M}_{\mathbb{U}\cup\mathbb{O}}(\pmb{y}_{\mathbb{E}}) = (\mathcal{T} / \mathcal{M}_{\mathbb{U}\cup\mathbb{O}})_{\mathbb{U}\cup\mathbb{O}}(\pmb{y}_{\mathbb{E}}) \hfill\Halmos
	\end{align*}
\endproof
\begin{proposition} \label{prop:expected_value}
	Let $\mathcal{T}$ be the TN derived from a MRF $\mathcal{M}$. For any partition $\{\mathbb{A}, \mathbb{B}, \mathbb{D}\}$ of $[n]$ we have $\ex_{\mathbb{A}}[\mathcal{T}_{\mathbb{B}}] = (\mathcal{T}_{\mathbb{B}} \cdot \mathcal{M}_{\mathbb{B}} / \mathcal{M}_{\mathbb{A} \cup \mathbb{B}})_{\mathbb{A}}$.
\end{proposition}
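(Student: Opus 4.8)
The plan is to expand both sides into explicit sums over $\mathbb{Y}_{\mathbb{A}}$ and verify they agree as functions of $\pmb{y}_{\mathbb{D}}$. First I would fix the reading of the left-hand side: since $\mathcal{T}_{\mathbb{B}}$ has had $\pmb{Y}_{\mathbb{B}}$ eliminated it is a function of $(\pmb{y}_{\mathbb{A}}, \pmb{y}_{\mathbb{D}})$, and $\ex_{\mathbb{A}}[\mathcal{T}_{\mathbb{B}}]$ denotes the conditional expectation over $\pmb{Y}_{\mathbb{A}}$ given $\pmb{Y}_{\mathbb{D}} = \pmb{y}_{\mathbb{D}}$, taken against the distribution induced by $\mathcal{M}$ (the hypothesis that $\mathcal{T}$ is derived from $\mathcal{M}$ is used only to guarantee that $\mathcal{T}$ lives on the same vertex set $[n]$, so that this expectation is well posed). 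Hence, for every $\pmb{y}_{\mathbb{D}} \in \mathbb{Y}_{\mathbb{D}}$,
\[ \ex_{\mathbb{A}}[\mathcal{T}_{\mathbb{B}}](\pmb{y}_{\mathbb{D}}) = \sum_{\pmb{y}_{\mathbb{A}} \in \mathbb{Y}_{\mathbb{A}}} \mathcal{T}_{\mathbb{B}}(\pmb{y}_{\mathbb{A}}, \pmb{y}_{\mathbb{D}}) \, \p(\pmb{y}_{\mathbb{A}} \mid \pmb{y}_{\mathbb{D}}). \]

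Next I would rewrite the conditional probability in terms of marginalized copies of $\mathcal{M}$. Writing $\p(\pmb{y}_{\mathbb{A}} \mid \pmb{y}_{\mathbb{D}}) = \p(\pmb{y}_{\mathbb{A}}, \pmb{y}_{\mathbb{D}}) / \p(\pmb{y}_{\mathbb{D}})$ and using the identity $\p(\pmb{y}_{\mathbb{S}}) = \frac{1}{z} \mathcal{M}_{\setminus \mathbb{S}}(\pmb{y}_{\mathbb{S}})$ recorded earlier, together with the fact that $\{\mathbb{A}, \mathbb{B}, \mathbb{D}\}$ partitions $[n]$ (so that $[n] \setminus (\mathbb{A} \cup \mathbb{D}) = \mathbb{B}$ and $[n] \setminus \mathbb{D} = \mathbb{A} \cup \mathbb{B}$), the numerator becomes $\frac{1}{z} \mathcal{M}_{\mathbb{B}}(\pmb{y}_{\mathbb{A}}, \pmb{y}_{\mathbb{D}})$ and the denominator $\frac{1}{z} \mathcal{M}_{\mathbb{A} \cup \mathbb{B}}(\pmb{y}_{\mathbb{D}})$; the normalizing constants cancel, leaving $\p(\pmb{y}_{\mathbb{A}} \mid \pmb{y}_{\mathbb{D}}) = \mathcal{M}_{\mathbb{B}}(\pmb{y}_{\mathbb{A}}, \pmb{y}_{\mathbb{D}}) / \mathcal{M}_{\mathbb{A} \cup \mathbb{B}}(\pmb{y}_{\mathbb{D}})$.

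Finally I would read the resulting expression backwards through the tensor-network algebra of Section~\ref{sec:network_arithmetics}: the sum $\sum_{\pmb{y}_{\mathbb{A}}} \mathcal{T}_{\mathbb{B}}(\pmb{y}_{\mathbb{A}}, \pmb{y}_{\mathbb{D}}) \, \mathcal{M}_{\mathbb{B}}(\pmb{y}_{\mathbb{A}}, \pmb{y}_{\mathbb{D}}) / \mathcal{M}_{\mathbb{A} \cup \mathbb{B}}(\pmb{y}_{\mathbb{D}})$ is exactly $(\mathcal{T}_{\mathbb{B}} \cdot \mathcal{M}_{\mathbb{B}} / \mathcal{M}_{\mathbb{A} \cup \mathbb{B}})_{\mathbb{A}}(\pmb{y}_{\mathbb{D}})$, because the network product and the quotient of Proposition~\ref{prop:network_quotient} act pointwise and the subscript $\mathbb{A}$ sums out $\pmb{Y}_{\mathbb{A}}$. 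A small bookkeeping remark is that $\mathcal{M}_{\mathbb{A} \cup \mathbb{B}}$ depends only on $\pmb{y}_{\mathbb{D}}$ while $\mathcal{T}_{\mathbb{B}}$ and $\mathcal{M}_{\mathbb{B}}$ depend on $\pmb{y}_{\mathbb{A} \cup \mathbb{D}}$, so all three are to be regarded as tensor networks on the common vertex set $\mathbb{A} \cup \mathbb{D}$ (with $\mathcal{M}_{\mathbb{A} \cup \mathbb{B}}$ carrying no potential that touches $\mathbb{A}$) before the product, quotient and elimination are applied, which is exactly the framework in which those operations were introduced.

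I expect the only genuinely delicate step to be the first one, namely being explicit that $\ex_{\mathbb{A}}$ is a conditional expectation given $\pmb{Y}_{\mathbb{D}}$ taken against the marginal law of $\pmb{Y}_{\mathbb{A}}$ induced by $\mathcal{M}$, and not against a product or uniform reference measure; once this is settled the remaining steps are a mechanical unfolding of definitions. As sanity checks I would verify the degenerate cases $\mathbb{D} = \emptyset$, where the claim collapses to an ordinary expectation and $\mathcal{M}_{\mathbb{A} \cup \mathbb{B}}$ to the scalar $z$, and $\mathbb{A} = \emptyset$, where both sides reduce to $\mathcal{T}_{\mathbb{B}}$.
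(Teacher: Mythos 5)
Your proposal is correct and follows essentially the same route as the paper's proof: expand $\ex_{\mathbb{A}}$ as a conditional expectation given $\pmb{Y}_{\mathbb{D}}=\pmb{y}_{\mathbb{D}}$, rewrite $\p(\pmb{y}_{\mathbb{A}}\mid\pmb{y}_{\mathbb{D}})$ as a ratio of marginalized copies of $\mathcal{M}$, and recognize the resulting sum as $(\mathcal{T}_{\mathbb{B}}\cdot\mathcal{M}_{\mathbb{B}}/\mathcal{M}_{\mathbb{A}\cup\mathbb{B}})_{\mathbb{A}}$. Your explicit remark that the normalizing constants $1/z$ cancel is a minor point the paper glosses over (it simply identifies $\p(\pmb{y}_{\mathbb{D}})$ with $\mathcal{M}_{\mathbb{A}\cup\mathbb{B}}(\pmb{y}_{\mathbb{D}})$), but otherwise the arguments coincide.
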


\proof{Proof.}
	Note that $\ex_{\mathbb{A}}(\mathcal{T}(\pmb{y}_{\setminus \mathbb{A}}))=\sum_{\pmb{y}_{\mathbb{A}\in\mathbb{Y}_{\mathbb{A}}}}\mathcal{T}(\pmb{y}_{\mathbb{A}},\pmb{y}_{\setminus \mathbb{A}})\cdot\p(\y_\A| \pmb{Y}_{\setminus \A} = \y_{\setminus \A})$ and $\p(\y_\mathbb{D})=\Bb_{\A\cup\B}(\y_\mathbb{D})$ for any $\y_{\mathbb{D}}\in\Y_{\mathbb{D}}$. Therefore 
\begin{align*}
\ex_\A(\Mm_\B(\y_{\mathbb{D}}))&=\sum_{\y_\A\in\Y_\A}\Mm_\B(\y_\A,\y_{\mathbb{D}})\cdot\p(\y_\A|\pmb{Y}_\mathbb{D}=\y_\mathbb{D})=\sum_{\y_\A\in\Y_\A}\Mm_\B(\y_\A,\y_{\mathbb{D}})\cdot\p(\y_\A,\y_\mathbb{D})/\p(\y_{\mathbb{D}}) \\
&= \sum_{\y_\A\in\Y_\A} (\Mm_\B(\y_\A,\y_{\mathbb{D}})\cdot \Bb_\B(\y_\A,\y_{\mathbb{D}})/\Bb_{\A\cup\B}(\y_{\mathbb{D}}))= (\Mm_\B\cdot\Bb_\B/\Bb_{\A\cup\B})_\A(\y_{\mathbb{D}}). \hfill\Halmos
\end{align*}
\endproof

\begin{proposition} \label{prop:expected_value_f}
	$\ex[f] = \mathcal{T}_{[n]}$.
\end{proposition}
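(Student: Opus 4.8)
The plan is to reduce $\ex[f]$ to a single marginalization of the function TN $\mathcal{T}$, using the representation of $f$ already established in Proposition~\ref{prop:f}. First I would unfold the definition of $\ex[f]$. Since $f$ is a function of the evidential variables $\pmb{Y}_{\mathbb{E}}$ only, and these are distributed according to the marginal law $\textnormal{Pr}(\pmb{y}_{\mathbb{E}})$, we have $\ex[f] = \sum_{\pmb{y}_{\mathbb{E}} \in \mathbb{Y}_{\mathbb{E}}} f(\pmb{y}_{\mathbb{E}})\, \textnormal{Pr}(\pmb{y}_{\mathbb{E}})$. Then I would substitute the two facts already available: on the one hand, as noted in Section~\ref{sec:method}, $\textnormal{Pr}(\pmb{y}_{\mathbb{E}}) = \mathcal{M}_{\setminus \mathbb{E}}(\pmb{y}_{\mathbb{E}}) = \mathcal{M}_{\mathbb{U}\cup\mathbb{O}}(\pmb{y}_{\mathbb{E}})$ (since $\{\mathbb{E},\mathbb{U},\mathbb{O}\}$ partitions $[n]$); on the other hand, by Proposition~\ref{prop:f}, $f(\pmb{y}_{\mathbb{E}}) = \mathcal{T}_{\mathbb{U}\cup\mathbb{O}}(\pmb{y}_{\mathbb{E}}) / \mathcal{M}_{\mathbb{U}\cup\mathbb{O}}(\pmb{y}_{\mathbb{E}})$.

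The two occurrences of $\mathcal{M}_{\mathbb{U}\cup\mathbb{O}}$ cancel, leaving $\ex[f] = \sum_{\pmb{y}_{\mathbb{E}}\in\mathbb{Y}_{\mathbb{E}}} \mathcal{T}_{\mathbb{U}\cup\mathbb{O}}(\pmb{y}_{\mathbb{E}})$. I would then observe that summing the TN $\mathcal{T}_{\mathbb{U}\cup\mathbb{O}}$ over all assignments of $\pmb{Y}_{\mathbb{E}}$ is by definition the elimination of $\mathbb{E}$, and that elimination is associative, so $(\mathcal{T}_{\mathbb{U}\cup\mathbb{O}})_{\mathbb{E}} = \mathcal{T}_{\mathbb{E}\cup\mathbb{U}\cup\mathbb{O}} = \mathcal{T}_{[n]}$, which is the claim. (Equivalently, one can bypass Proposition~\ref{prop:f}: by Definition~\ref{def:function_tn}, $\mathcal{T}(\pmb{y}) = y_{\mathbb{O}}\,\mathcal{M}(\pmb{y}) = y_{\mathbb{O}}\,\textnormal{Pr}(\pmb{y})$, so $\mathcal{T}_{[n]} = \sum_{\pmb{y}\in\mathbb{Y}} y_{\mathbb{O}}\,\textnormal{Pr}(\pmb{y}) = \ex[Y_{\mathbb{O}}] = \ex\!\big[\ex_{\mathbb{U}}[Y_{\mathbb{O}}\mid\pmb{Y}_{\mathbb{E}}]\big] = \ex[f]$ by the tower property; but the route through Proposition~\ref{prop:f} is shorter given what precedes.)

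There is essentially no hard step here: the statement is pure bookkeeping once Propositions~\ref{prop:f} and the identity $\textnormal{Pr}(\pmb{y}_{\mathbb{E}}) = \mathcal{M}_{\setminus\mathbb{E}}(\pmb{y}_{\mathbb{E}})$ are in hand. The only point requiring care is to weight the expectation of $f$ by the true marginal distribution $\textnormal{Pr}(\pmb{y}_{\mathbb{E}})$ of the (possibly correlated) evidential variables—rather than a product or uniform measure—so that it matches exactly the denominator produced by Proposition~\ref{prop:f} and the cancellation is clean; and to note that repeated marginalization commutes, so that eliminating $\mathbb{E}$ from $\mathcal{T}_{\mathbb{U}\cup\mathbb{O}}$ yields $\mathcal{T}_{[n]}$.
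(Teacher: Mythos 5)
Your proposal is correct and follows essentially the same route as the paper: the paper's proof also starts from Proposition~\ref{prop:f}, weights by the marginal $\mathcal{M}_{\setminus\E}$ via Proposition~\ref{prop:expected_value} (with $\mathcal{M}_{[n]}=1$), cancels the quotient using Proposition~\ref{prop:quotient_projection}, and concludes $(\mathcal{T}_{\setminus\E})_{\E}=\mathcal{T}_{[n]}$ --- you have merely unpacked those lemmas into explicit sums. Your parenthetical shortcut, $\mathcal{T}(\pmb{y})=y_{\Oo}\,\textnormal{Pr}(\pmb{y})$ plus the tower property, is a valid and even more elementary alternative that bypasses the network-arithmetic machinery entirely.
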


\proof{Proof.}
Combining Propositions.~\ref{prop:quotient_projection} to \ref{prop:expected_value} and using the fact that $\mathcal{M}_{[n]} = 1$, we have 
\[
\ex[f] = \ex_{\E}[(\mathcal{T}/\mathcal{M}_{\setminus\E})_{\setminus \E}] = ((\mathcal{T}/\mathcal{M}_{\setminus \E})_{\setminus \E} \cdot \mathcal{M}_{\setminus \E} / \mathcal{M}_{\setminus \E \cup \E})_{\E}= (\mathcal{T}_{\setminus \E}/\mathcal{M}_{\setminus \E} \cdot \mathcal{M}_{\setminus \E})_{\E} = (\mathcal{T}_{\setminus \E})_{\E} = \mathcal{T}_{[n]}. \hfill\Halmos
\]
\endproof

Once equipped with the tools described above, we are ready to compute the Sobol indices as defined in Section~\ref{sec:sobol_indices}.

\subsection{Computing the Global Variance of $f$}
\label{sec:global_variance}

Both types of Sobol indices (Equations~\ref{eq:variance_component} and~\ref{eq:total_index}) require computing $\mathrm{Var}[f]$ in the denominator. We do so via the formula $\mathrm{Var}[f] = \ex[f^2] - \ex[f]^2$ and deriving each of these terms as follows:
\begin{itemize}
	\item First term: \begin{align}\ex[f^2]&= \ex_{\E}[(\mathcal{T}/\mathcal{M}_{\setminus \E})^2_{\setminus \E}] \mbox{ (using Prop.~\ref{prop:f})}\nonumber \\
	&= ((\mathcal{T} / \mathcal{M}_{\setminus \E})^2_{\setminus \E} \cdot \mathcal{M}_{\setminus \E} / \mathcal{M}_{\setminus \E \cup \E})_{\E} \mbox{ (using Prop.~\ref{prop:expected_value})} \nonumber\\
	&= (\mathcal{T}^2_{\setminus \E} / \mathcal{M}_{\setminus\E} / \mathcal{M}_{\setminus \E \cup \E})_{\E} \mbox{ (using Prop.~\ref{prop:quotient_projection})}\nonumber \\
	&= (\mathcal{T}^2_{\setminus\E}/\mathcal{M}_{\setminus \E})_{\E} 
 \label{eq:global1} \\
	&= (\mathcal{T}^2/\mathcal{M}_{\setminus \E})_{\E \cup \setminus \E \cup \widehat{\setminus \E}} \mbox{, (where the square of } \mathcal{T} \mbox{ is taken w.r.t. } \E\mbox{)}. \label{eq:global2}
	\end{align}
	Note the computational advantage of Equation~(\ref{eq:global2}) over Equation~(\ref{eq:global1}): the latter requires marginalization over variables $\setminus \E$, then $\E$, while the former marginalizes once over all variables, which gives marginalization heuristics full freedom to find the best order.
	\item Second term: $\ex^2[f] = (\mathcal{T}_{[n]})^2$ (using Prop.~\ref{prop:expected_value_f}).
\end{itemize}

\subsection{Computing the Variance Components} \label{sec:computing_variance_component}

We obtain index $S_i$ via Equation~(\ref{eq:variance_component}), where the numerator is found via the identity
\begin{equation}
\label{eqq:identity}
	\mathrm{Var}_i[\ex_{\setminus i}[f]] = \ex_i[E^2_{\setminus i}[f]] - \ex^2_i[\ex_{\setminus i}[f]]
\end{equation}
and by observing that:
\begin{itemize}
	\item The first term is equal to
	\begin{align*} \ex_i[\ex^2_{\setminus i}[f]] &= \ex_i[\ex^2_{\setminus i}[(\mathcal{T}/\mathcal{M}_{\setminus \E})_{\setminus\E}]] \mbox{ (using Prop.~\ref{prop:f})} \\
	&= \ex_i[((\mathcal{T}/\mathcal{M}_{\setminus \E})_{\setminus \E} \cdot \mathcal{M}_{\setminus \E} / \mathcal{M}_{\setminus \E \cup \setminus i})_{\setminus i}^2] \mbox{ (using Prop.~\ref{prop:expected_value})} \\
	&= \ex_i[(\mathcal{T}_{\setminus \E}/\mathcal{M}_{\setminus \E} \cdot \mathcal{M}_{\setminus \E} / \mathcal{M}_{\setminus \E \cup \setminus i})_{\setminus i}^2]\\ 
	&= \ex_i[(\mathcal{T}_{\setminus \E}/\mathcal{M}_{\setminus i})^2_{\setminus i}] \mbox{ (since } \setminus \E \cup \setminus i = \setminus i)\\
	&= ((\mathcal{T}_{\setminus \E}/\mathcal{M}_{\setminus i})^2_{\setminus i} \cdot \mathcal{M}_{\setminus i} / \mathcal{M}_{\setminus i \cup \{i\}})_i \mbox{ (using Prop.~\ref{prop:expected_value})} \\
	&= (\mathcal{T}^2_{\setminus i} / \mathcal{M}_{\setminus i})_i.
	\end{align*}
	Note that, since $\mathcal{T}_{\setminus i}$ is a one-variable network, $\mathcal{T}^2_{\setminus i}$ is most efficiently computed as $(\mathcal{T}_{\setminus i})^2$.
	\item For the second term, $\ex^2_i[\ex_{\setminus i}[f]] = (\ex_i[\ex_{\setminus i}[f]])^2 = \ex^2[f] = (\mathcal{T}_{[n]})^2$ (using Prop.~\ref{prop:expected_value_f}).
\end{itemize}

The denominator is simply the model's global variance $\mathrm{Var}[f]$ (Section~\ref{sec:global_variance}).

\subsection{Computing the Total Indices}

To obtain index $S^T_i$, we use the following alternative definition:
\begin{equation}
S^T_i = 1-\frac{\mathrm{Var}_{\setminus i} \left[ \ex_i[f] \right] }{\mathrm{Var}[f]},
\label{eq:total_index2}
\end{equation}
which is equivalent~\citep{SRACCGST:08} to the one given in Equation~(\ref{eq:total_index}). To obtain the numerator, we use the identity in Equation~(\ref{eqq:identity})
and observe that:

\begin{itemize}
	\item For the first term, analogously to Section~\ref{sec:computing_variance_component}, substituting $i$ for $\setminus i$ and vice versa we arrive at $\ex_{\setminus i}[\ex^2_i[f]] = (\mathcal{T}^2_i / \mathcal{M}_{\setminus \E \cup \{i\}})_{\setminus i}$, where the square is taken w.r.t. $\E \setminus i$.
	\item For the second term, $\ex^2_{\setminus i}[\ex_i[f]] = (\ex_{\setminus i}[\ex_i[f]])^2 = \ex^2[f] = (\mathcal{T}_{[n]})^2$ (using Prop.~\ref{prop:expected_value_f}).
\end{itemize}

\subsection{Summarized Algorithm}

We put together all steps in Algorithm~\ref{alg:main} so as to minimize the cost of intermediate calculations. Note that our algorithm can also obtain Sobol indices for any vector of variables $\pmb{\alpha}$ (the so-called higher-order indices), since all mathematical arguments above hold true when replacing a single variable $i$ by a vector of them $\pmb{\alpha}$.
\begin{algorithm}[h!]
	\SetAlgoLined
	\KwIn{Bayesian network $\mathcal{B}$ with variables $\Oo\cup\U\cup\E$, variable of interest $i \in \E$, domain $\mathbb{Y}_\Oo$ of the output variable $Y_\Oo$}
	\KwResult{Variance component $S_i$ and total index $S^T_i$ for variable $Y_i$}
	
	$\blacktriangleright$ Derive the MRF $\mathcal{M}$ that is equivalent to $\mathcal{B}$ (Prop.~\ref{prop:eq})
	
	$\blacktriangleright$ Construct the function TN $\mathcal{T}$ of $\mathcal{M}$ (Def.~\ref{def:function_tn})
	
	$\blacktriangleright$ Find $\mathcal{J} := \mathcal{M}_{\setminus \E}$ via marginalization
	
	$\blacktriangleright$ Compute $\mathcal{T}_i, \mathcal{J}_i, \mathcal{T}_{\setminus i}, \mathcal{J}_{\setminus i}, \mathcal{T}_{[n]}$ via marginalization
	
	$\blacktriangleright$ Compute $\mathcal{T}^2, \mathcal{T}^2_i$, and $\mathcal{T}^2_{\setminus i}$ via network squaring (Prop.~\ref{prop:square})
	
	$\blacktriangleright$ Compute $\mathcal{T}^2 / \mathcal{J}, \mathcal{T}^2_i / \mathcal{J}_i$, and $\mathcal{T}^2_{\setminus i} / \mathcal{J}_{\setminus i}$ via network quotient (Prop.~\ref{prop:network_quotient})
	
	$\blacktriangleright$ $V := (\mathcal{T}^2/\mathcal{J})_{\E \cup \setminus \E \cup \widehat{\setminus \E}} - (\mathcal{T}_{[n]})^2$
	
	$\blacktriangleright$ $S_i := ((\mathcal{T}^2_{\setminus i} / \mathcal{J}_{\setminus i})_i - (\mathcal{T}_{[n]})^2) / V$
	
	$\blacktriangleright$ $S^T_i := 1 - \left( (\mathcal{T}^2_i / \mathcal{J}_i)_{\setminus i} - (\mathcal{T}_{[n]})^2 \right) / V$
	
	\Return $S_i, S^T_i$
	\caption{Compute first-order Sobol indices for a given BN}
	\label{alg:main}
\end{algorithm}

\section{Illustrations} \label{sec:simulations}

In order to assess the validity of our algorithms, we now consider two networks based on simulated data and one real-world network. In all cases, we use exact marginalization via the \emph{minimal weight} heuristic for variable ordering~\citep{Kjaerulff:90}: at each step, we marginalize the node such that the product of its neighbors' cardinalities is minimal.

\subsection{Concrete Resistance}

As a first example, we consider a 24-node Bayesian network due to \cite{CK:03} designed to assess the damage of reinforced concrete structures of buildings. There is one output variable (the damage of a reinforced concrete beam), 16 observable variables influencing the damage of reinforced concrete structures and seven intermediate unobservable variables that define some partial states of the structure. The list of variables and their definition is reported in Table \ref{table:resistance} and their cause-effect relationships are summarized by the BN in Figure~\ref{fig:concrete}. The original variables in \citet{CK:03} are continuous and their model is defined as a Gaussian BN \citep[e.g.][]{Darwiche2009}. From their definition we derived a discrete BN as follows: (i) we simulate 1000000 observations from their Gaussian BN; (ii) each variable is independently discretized into three categories (low, medium, high) using the equal frequency method \citep{Nojavan2017}; (iii) retaining the same DAG as in \citet{CK:03} the conditional probabilities for the discrete data are learned. All these steps are carried out with the \texttt{bnlearn} package \citep{Scutari2010}.

\begin{table}
	\centering
	\footnotesize
\TABLE
{Variable labels and names for the concrete resistance network.\label{table:resistance}}{
\setlength{\tabcolsep}{20pt}
\renewcommand{\arraystretch}{0.7}
	\begin{tabular}{ll}
\toprule
Variable & Definition\\
\midrule
$O$ & Damage assessment\\
$U_7$ & Cracking state\\
$U_6$ & Cracking state in shear domain\\
$U_5$ & Steel corrosion\\
$U_4$ & Cracking state in flexure domain\\
$U_3$ & Shrinkage cracking\\
$U_2$ & Worst cracking in flexure domain\\
$U_1$ & Corrosion state\\
$E_{16}$ & Weakness of the beam\\
$E_{15}$ & Deflection of the beam\\
$E_{14}$ & Position of the worst shear crack\\
$E_{13}$ & Breadth of the worst shear crack\\
$E_{12}$ & Position of the worst flexure crack\\
$E_{11}$ & Breadth of the worst flexure crack\\
$E_{10}$ & Length of the worst flexure cracks\\
$E_9$ & Cover\\
$E_8$ & Structure age\\
$E_7$ & Humidity\\
$E_6$ & pH value in the air\\
$E_5$ & Content of chlorine in the air\\
$E_4$ & Number of shear cracks\\
$E_3$ & Number of flexure cracks\\
$E_2$ & Shrinkage\\
$E_1$ & Corrosion\\
\bottomrule
\end{tabular}}
{}
\end{table}

The 16 evidential variables are the roots of the DAG (that is, without parents), for a total of $3^{16} \approx 4.31 \cdot 10^7$ grid points in the domain of the function of interest $f$. The resulting concrete resistances are given by output variable $O$, whose values we map to $\mathbb{R}$ as follows: $\mathrm{low} \equiv 0, \mathrm{medium} \equiv 1$, and $\mathrm{high} \equiv 2$. Table~\ref{tab:concrete} and Figure~\ref{fig:concrete} summarize the global sensitivity indices. Overall, computing all 16 variance components and total indices took $1.12$s and $1.79$s, respectively. Each index requires four marginalizations in our implementation, for a total of $16 \cdot 2 \cdot 4 = 128$ such operations that took 2.24s globally. Based on the average marginalization time, a brute-force computation evaluating all entries of $f$ would have taken well above $10^5$ seconds, and even a Monte Carlo approximation using $10^6$ samples would take ca. 18'000s.

The results suggest that the most critical variables for the output are $E_{12}$ (position of the worst flexure crack), $E_{13}$ (breadth of the worst shear crack) and $E_{14}$ (position of the worst shear crack). Notice that the significant differences between variance components and total indices indicate that much of those effects are due to variable interactions, not to single variables alone, therefore highlighting the need of global sensitivity analyses to uncover the actual relationships between input and output variables.

\begin{table}
	\centering
	\footnotesize
\TABLE
{Sobol indices and computation times for the concrete resistance network.
	\label{tab:concrete}}{
\setlength{\tabcolsep}{16pt}
\renewcommand{\arraystretch}{0.7}
	\begin{tabular}{lrrrr}
\toprule
Var. $i$ &   $S_i$ &  Time ($S_i$) &  $S^T_i$ &  Time ($S^T_i$) \\
\midrule
      E1 & 0.05464 &       0.06325 &  0.06335 &         0.09594 \\
      E2 & 0.03081 &       0.05967 &  0.03569 &         0.10103 \\
      E3 & 0.06339 &       0.06109 &  0.07750 &         0.09831 \\
      E4 & 0.04156 &       0.05868 &  0.05470 &         0.09775 \\
      E5 & 0.04837 &       0.06028 &  0.06217 &         0.09880 \\
      E6 & 0.06281 &       0.06016 &  0.07860 &         0.09927 \\
      E7 & 0.01783 &       0.05837 &  0.02503 &         0.09913 \\
      E8 & 0.02344 &       0.05937 &  0.03393 &         0.09868 \\
      E9 & 0.02357 &       0.05922 &  0.03412 &         0.09612 \\
   E10 & 0.05574 &       0.06039 &  0.07461 &         0.09657 \\
     E11 & 0.05560 &       0.05826 &  0.07452 &         0.09817 \\
     E12 & 0.12943 &       0.05956 &  0.15069 &         0.09695 \\
     E13 & 0.14802 &       0.05955 &  0.17370 &         0.09784 \\
     E14 & 0.08480 &       0.05865 &  0.10512 &         0.10081 \\
     E15 & 0.07363 &       0.05868 &  0.07535 &         0.10183 \\
     E16 & 0.01338 &       0.05963 &  0.01388 &         0.09872 \\
\bottomrule
\end{tabular}

}
{}
\end{table}

\begin{figure*}\centering
\FIGURE{\includegraphics[width=0.6\columnwidth]{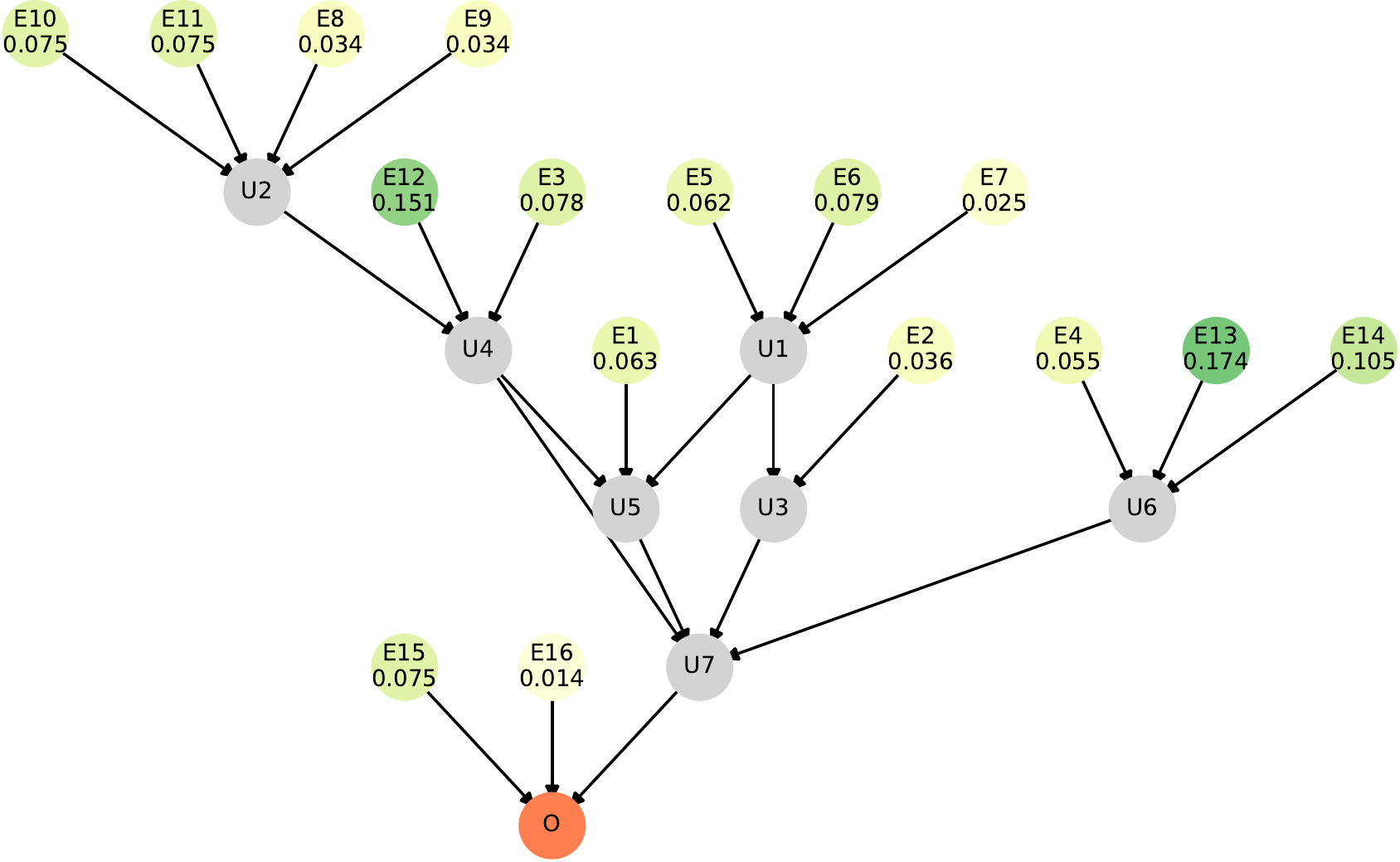}}
	{Total indices for the concrete resistance network: chance nodes are shown in gray, output node in orange, while evidential nodes are color-coded according to their respective total indices.
	\label{fig:concrete}}
{}
\end{figure*}

\subsection{Synthetic Network with Dependent Inputs}

As a second simulated example, we created a 40-node Gaussian Bayesian network with three output nodes, 15 evidential nodes, and 22 chance nodes. Both the structure of the DAG (reported in Figure~\ref{fig:synthetic}) and the conditional distributions of the nodes where chosen a priori so as to best illustrate the methodology. In this case, however, the evidential variables exhibit some dependence. A discrete BN is obtained using the same routine as in the previous example. The output is a new additional node $O(O_1, O_2, O_3) := O_1 + O_2 + O_3$, where $\mathrm{low} \equiv 0, \mathrm{medium} \equiv 1$ and $\mathrm{high} \equiv 2$. Since there are 15 evidential variables, the domain of the function of interest $f$ contains $3^{15} \approx 1.43 \cdot 10^7$ grid points. Table~\ref{tab:synthetic} (right) and Figure~\ref{fig:synthetic} report the global sensitivity indices. Note that $S_i > S^T_i$ for several variables; this is a well-known consequence of the non-independence of the function's inputs. Overall, computing all 15 variance components and total indices took $5.88$s and $11.48$s, respectively. Although, the computation time is slightly larger than in the previous example, it is still feasible to compute the global sensitivity indices in the much more challenging case of dependent inputs.

To illustrate the consequences of overlooking dependence in the inputs, we created the same discrete BN with 15 inputs, but where the dependence between the evidential variables is deleted. The global sensitivity indices for the independent network are reported on the left side of Table~\ref{tab:synthetic}. We can notice two things. First, $S_i^T \ge S_i$ for all variables, due to missed dependence between inputs. Second, evidential variables that are important for the output through the dependence with other inputs (for instance $E_2$ or $E_5$) have now a much smaller effect on the output and their relevance would be overlooked by considering the inputs independent.

\begin{table}
	\centering
	\footnotesize
\TABLE
{Sobol indices and computation times for the synthetic network (with both dependent and independent inputs).
	\label{tab:synthetic}}{
\setlength{\tabcolsep}{6pt}
\renewcommand{\arraystretch}{0.7}
	\begin{tabular}{lrrrr|rrrr}
\toprule
& \multicolumn{4}{c|}{Independent inputs} & \multicolumn{4}{c}{Dependent inputs} \\
Var. $i$ &   $S_i$ &  Time ($S_i$) &  $S^T_i$ &  Time ($S^T_i$) &  $S_i$ &  Time ($S_i$) &  $S^T_i$ &  Time ($S^T_i$) \\
\midrule
      E1 & 0.00812 &       0.38456 &  0.00832 &         0.66675 & 0.05757 &        0.36064 &   0.00437 &          0.63704 \\
     E10 & 0.00617 &       0.35793 &  0.00658 &         0.67984 & 0.00277 &        0.35715 &   0.00311 &          0.65014 \\
     E11 & 0.03099 &       0.35240 &  0.03118 &         0.65987 & 0.01198 &        0.35403 &   0.01237 &          0.64012 \\
     E12 & 0.06793 &       0.34724 &  0.06883 &         0.74234 & 0.04408 &        0.35532 &   0.02202 &          0.71773 \\
     E13 & 0.00471 &       0.35308 &  0.00497 &         0.71207 & 0.02735 &        0.35807 &   0.00482 &          0.72696 \\
     E14 & 0.00336 &       0.35799 &  0.00337 &         0.69925 & 0.00121 &        0.36493 &   0.00124 &          0.65699 \\
     E15 & 0.03169 &       0.36250 &  0.03193 &         0.60682 & 0.01366 &        0.35283 &   0.01395 &          0.61534 \\
      E2 & 0.04169 &       0.35959 &  0.04200 &         0.64520 & 0.15364 &        0.35414 &   0.01426 &          0.64304 \\
      E3 & 0.02196 &       0.35772 &  0.02223 &         0.64319 & 0.10733 &        0.35941 &   0.01452 &          0.63806 \\
      E4 & 0.05410 &       0.35593 &  0.05445 &         0.64833 & 0.13931 &        0.35352 &   0.03504 &          0.65013 \\
      E5 & 0.08062 &       0.35327 &  0.08140 &         0.58844 & 0.27517 &        0.35414 &   0.03071 &          0.60475 \\
      E6 & 0.34252 &       0.35654 &  0.34322 &         0.59323 & 0.37965 &        0.35600 &   0.17480 &          0.61408 \\
      E7 & 0.11156 &       0.35384 &  0.11220 &         0.64536 & 0.16543 &        0.41148 &   0.05473 &          0.64713 \\
      E8 & 0.08040 &       0.35663 &  0.08076 &         0.59313 & 0.13329 &        0.35635 &   0.02463 &          0.61735 \\
      E9 & 0.11108 &       0.35002 &  0.11224 &         0.64339 & 0.17694 &        0.35846 &   0.06924 &          0.64184 \\
\bottomrule
\end{tabular}
}
{}
\end{table}

\begin{figure*}\centering
\FIGURE{
	\includegraphics[width=0.6\columnwidth]{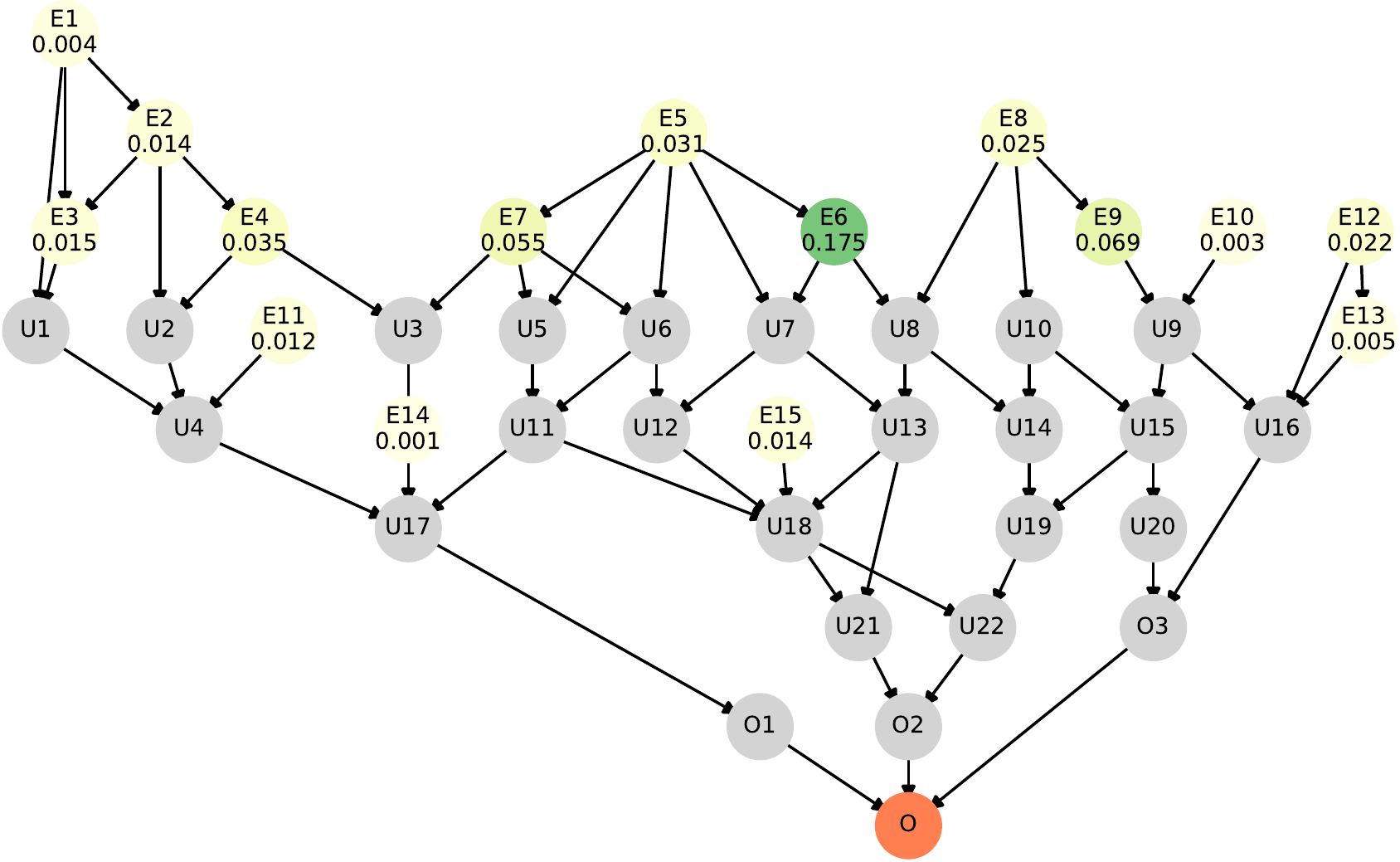}}
	{Total indices for the synthetic networks with dependent inputs: chance nodes are shown in gray, output node in orange, while evidential nodes are color-coded according to their respective total indices.
	\label{fig:synthetic}}
{}
\end{figure*}

\subsection{Risk Assessment in Project Complexity}

Our last example is a BN due to~\cite{QQDK:16} representing
critical risks specific to a construction project and adapted from
an existing model proposed by \citet{Eybpoosh2011}. \citet{QQDK:16} considers eight project complexity elements which can be chosen at the starting of the project as well as four project objectives: timeliness; cost; quality; and market share. All these objectives are henceforth presented as negative counterparts in order to align these to the notion of risks and are assumed to be equally important. There are 14 additional variables that are not observed at the start of the project (chance variables). Table \ref{table:project} reports a full list of the variables, which are all defined as binary. Although the eight project complexity elements are considered as decision variables in \citet{QQDK:16}, giving an influence diagram representation of the problem \citep[e.g.][]{Darwiche2009}, here they are considered as evidential random variables and are assigned a priori an equal probability to their outcomes.

\begin{table}
	\centering
	\footnotesize
\TABLE
{Variable labels and names for the project complexity network.\label{table:project}}{
\setlength{\tabcolsep}{12pt}
\renewcommand{\arraystretch}{0.7}
	\begin{tabular}{ll}
\toprule
Variable & Definition\\
\midrule
$E_1$ & Lack of experience with the involved team\\
$E_2$ & Use of innovative technology\\
$E_3$ & Lack of experience with technology \\
$E_4$ & Strict quality requirements\\
$E_5$ & Multiple contracts \\
$E_6$ & Multiple stakeholder and variety of perspectives\\
$E_7$ & Political instability\\
$E_8$ & Susceptibility to natural disasters\\
$U_1$ & Contractors' lack of experience  \\
$U_2$ & Suppliers default \\
$U_3$ &  Delays in design and regulatory approval\\
$U_4$ & Contract related problems\\
$U_5$ & Economic issues in country \\
$U_6$ & Major design changes \\
$U_7$ & Delays in obtaining raw material \\
$U_8$ & Non-availability of local resources \\
$U_9$ & Unexpected events\\
$U_{10}$ & Increase in raw material prices\\
$U_{11}$ & Change in project specifications\\
$U_{12}$ & Conflicts with project stakeholders\\
$U_{13}$ & Decrease in productivity\\
$U_{14}$ & Delays/interruptions \\
$O_{1}$ & Decrease in quality of work \\
$O_{2}$ & Low market share/reputational issues \\
$O_{3}$ & Time overrun \\
$O_{4}$ & Cost overrrun \\
\bottomrule
\end{tabular}}
{}
\end{table}

Table~\ref{tab:project_management} and Figure~\ref{fig:project_management} report a summary of the results. Overall, computing all 8 variance components and total indices took $1.06$s and $2.00$s, respectively, thus highlighting the computational efficiency of our algorithms. Clearly, the lack of experience with the involved team ($E_1$) has the biggest impact on the possible difficulties involved with the project, with Sobol indices more than twice larger than for all other variables. Other complexity elements that are relevant for the output are (in order of importance) strict quality requirements ($E_4$), lack of experience with technology ($E_3$) and political instability ($E_7$).

\begin{table}
	\centering
	\footnotesize
\TABLE
{Sobol indices and computation times for the project complexity network.\label{tab:project_management}}{
\setlength{\tabcolsep}{12pt}
\renewcommand{\arraystretch}{0.7}
		\begin{tabular}{lrrrr}
\toprule
Var. $i$ &   $S_i$ &  Time ($S_i$) &  $S^T_i$ &  Time ($S^T_i$) \\
\midrule
      E1 & 0.48218 &       0.12326 &  0.48513 &         0.22200 \\
      E2 & 0.00502 &       0.12252 &  0.02151 &         0.21638 \\
      E3 & 0.12845 &       0.12056 &  0.13723 &         0.21051 \\
      E4 & 0.16822 &       0.12094 &  0.18604 &         0.21015 \\
      E5 & 0.00193 &       0.11998 &  0.00466 &         0.21473 \\
      E6 & 0.00093 &       0.12192 &  0.00224 &         0.22305 \\
      E7 & 0.09408 &       0.12048 &  0.10671 &         0.22147 \\
      E8 & 0.08309 &       0.12162 &  0.09678 &         0.22114 \\
\bottomrule
\end{tabular}

}
{}
\end{table}

\begin{figure*}\centering
\FIGURE{
	\includegraphics[width=0.6\columnwidth]{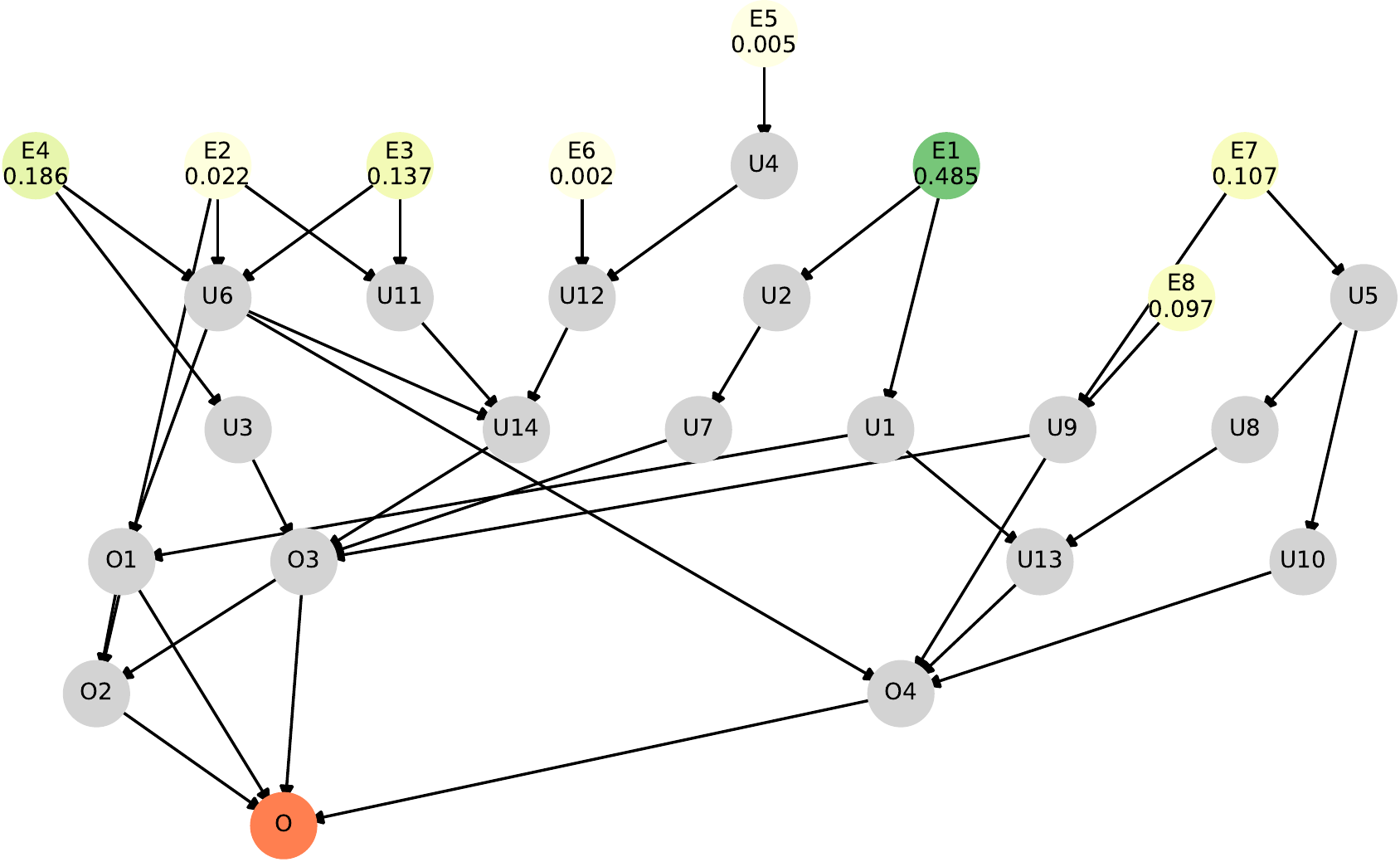}}
	{Total indices for the synthetic networks with dependent inputs: chance nodes and initial output nodes are shown in gray, final output node in orange, while evidential nodes are color-coded according to their respective total indices.
	\label{fig:project_management}}
{}
\end{figure*}

\section{Conclusions}

We have addressed efficient Sobol sensitivity analysis of Bayesian networks, more specifically when the output is the expected value of one of its nodes as a function of a set of other nodes. Our algorithms compute the two most important Sobol indices (variance components and total indices) by first transforming the input BN into a TN that captures the target function. Afterwards, via various manipulations of the TN, we compute each required variance using only a few inference queries (marginalization) over the modified TN. We showed this procedure to scale well: it is able to handle networks where both brute-force and Monte Carlo estimation would be impossible, as they would require millions of inference queries over the BN. In addition, our method is exact, provided that exact inference is affordable on the modified TN.

Our method is a generalization of previous SA algorithms for acyclic tensor networks such as the Tucker or tensor train decompositions. Our version works on cyclic networks, which arise naturally (due to moralization) when casting a general Bayesian network into a TN. The key enabling concepts are those of network squaring and quotient, thanks to which we can manipulate the function of interest so as to produce networks whose marginalization yields the desired statistics.

As a venue of future research, we will explore the feasibility of computing more sophisticated variance-based sensitivity indices from BN-defined functions. These potentially include the mean dimension, the Shapley values, the dimension distribution, and other more recent metrics that are based on the method of Sobol.

%
 \begin{APPENDICES}
\section{Elements of Graph Theory}

\subsection{Graphs}
A graph $G = (\mathbb{V}(G), \mathbb{E}(G))$ is defined by two sets $\mathbb{V}(G)$ and $\mathbb{E}(G)$, where $\mathbb{V}(G)$ is a finite set of vertices, in this paper $[n]$, and $\mathbb{E}(G)$ is a set of edges consisting of a subset of all possible ordered pairs of distinct vertices. The edges of a graph can be directed on undirected, depending on whether or not the order of the involved vertices matters.

Let $G = (\mathbb{V}(G), \mathbb{E}(G))$ be a graph. When $(i,j)\in\mathbb{E}(G)$ and $(j,i)\not\in\mathbb{E}(G)$, we say there is a directed edge from $i$ to $j$. Conversely, if $(i,j), (j,i)\in \mathbb{E}(G)$, we say there is an undirected edge between $i$ and $j$. A graph which all edges are directed is called a directed graph, a graph in which all edges are undirected is called undirected graph, and if both types of edges are present the graph is called mixed. 

A path from vertex $i$ to vertex $j$ is an ordered sequence of vertices $(i_1,\dots,i_r)$ starting in $i=i_1$ and ending in $j=i_r$ such that there is an edge $(i_k,i_{k+1})$ in $G$, $k=1,\dots,r-1$. The length of the path is $r-1$. A path is said to be closed if it has the same starting and ending vertices, that is $i=j$.

\subsection{Directed Graphs}
If $(i,j)\in \mathbb{V}(G)$, $i$ is said to be a parent of $j$ and $j$ is said to be a child of $i$. The set of all parents of a node $i$ is denoted by $\mathbb{P}_i$. The descendant set of a node $i$ is made of all vertices $j$ such that there is a path from $i$ to $j$, and denoted by $\mathbb{D}_i$. The set of non-descendants of a vertex $i$ is $\mathbb{ND}_i=\mathbb{V}(G)\setminus\mathbb{D}_i$. 

A cycle is a closed path in a directed path. A directed acyclic graph is a directed graph with no cycles. The mixed graph obtained by first joining every pair of vertices with a common child in a directed graph is called a moral graph. The undirected graph which substitutes every directed edge in a moral graph with an undirected edge is called a skeleton.

\subsection{Undirected Graphs}
A set of vertices $\mathbb{A}\subseteq \mathbb{V}(G)$ is said to be complete if there are edges between every pair of nodes in $\mathbb{A}$. A complete set of nodes $\mathbb{C}$ is called a clique if it is maximal, that is, it is not a proper subset of another complete set. A loop is a closed path in a directed graph. A chord is and edge between two vertices in a loop that is not contained in that loop. An undirected graph is said to be triangulated if every loop of length four or more has at least one chord. The skeleton of a directed acyclic graph is triangulated.

Let $(\mathbb{C}_1,\dots, \mathbb{C}_k)$ be an ordered sequence of all cliques of an undirected graph and $S_i = C_i \cap \{C_1\cup \cdots \cup C_{i-1}\}$ be the separator. The sequence $(\mathbb{C}_1,\dots, \mathbb{C}_k)$ is said to respect the running intersection property if $S_i$ is a subset of at least one element in $\{C_1,\dots, C_{i-1}\}$, for all $i=1,\dots,k$. An undirected graph has at least one ordered sequence of cliques respecting the running intersection property if and only if it is triangulated. Triangulated graphs, which respect the running intersection property, have great computational advantages, and this is the reason why a directed acyclic graph is usually transformed into its skeleton. We will not give details here about this \citep[which can be found in][]{Darwiche2009}.

\subsection{Hypergraphs}
A hypergraph is a generalization of a graph in which an edge can join any number of vertices. Conversely, in ordinary graphs an edge can only join two vertices. Formally, an hypergraph $H=(\mathbb{V}(H),\mathbb{E}(H))$ where $\mathbb{V}(H)$ is a set of vertices and $\mathbb{E}(H)$ is a set of non-empty subsets of $\mathbb{V}(H)$ called hyperedges. Therefore, $\mathbb{E}(H)\subset \mathcal{P}(\mathbb{V}(H))\setminus \emptyset$.

 \end{APPENDICES}
%
%




\bibliographystyle{informs2014} 
\bibliography{references} 

\begin{thebibliography}{54}
\providecommand{\natexlab}[1]{#1}
\providecommand{\url}[1]{\texttt{#1}}
\providecommand{\urlprefix}{URL }

\bibitem[{Ballester-Ripoll et~al.(2018)Ballester-Ripoll, Paredes,
  \protect\BIBand{} Pajarola}]{BPP:18}
Ballester-Ripoll R, Paredes EG, Pajarola R (2018) Tensor algorithms for
  advanced sensitivity metrics. \emph{SIAM/ASA Journal on Uncertainty
  Quantification} 6(3):1172--1197.

\bibitem[{Ballester-Ripoll et~al.(2019)Ballester-Ripoll, Paredes,
  \protect\BIBand{} Pajarola}]{BPP:19}
Ballester-Ripoll R, Paredes EG, Pajarola R (2019) Sobol tensor trains for
  global sensitivity analysis. \emph{Reliability Engineering \& System Safety}
  183:311--322.

\bibitem[{Bielza \protect\BIBand{} Larra\~{n}aga(2014)}]{Bielza2014a}
Bielza C, Larra\~{n}aga P (2014) Bayesian networks in neuroscience: a survey.
  \emph{Frontiers in Computational Neuroscience} 8:131.

\bibitem[{Bolt \protect\BIBand{} Renooij(2014)}]{Bolt2014}
Bolt JH, Renooij S (2014) Local sensitivity of {B}ayesian networks to multiple
  simultaneous parameter shifts. \emph{European Workshop on Probabilistic
  Graphical Models}, 65--80 (Springer).

\bibitem[{Borgonovo(2017)}]{Borgonovo2017}
Borgonovo E (2017) \emph{Sensitivity analysis: an introduction for the
  management scientist} (Springer).

\bibitem[{Borgonovo \protect\BIBand{} Plischke(2016)}]{Borgonovo2016}
Borgonovo E, Plischke E (2016) Sensitivity analysis: a review of recent
  advances. \emph{European Journal of Operational Research} 248(3):869--887.

\bibitem[{Cai et~al.(2018)Cai, Kong, Liu, Lin, Yuan, Xu, \protect\BIBand{}
  Ji}]{Cai2018}
Cai B, Kong X, Liu Y, Lin J, Yuan X, Xu H, Ji R (2018) Application of
  {B}ayesian networks in reliability evaluation. \emph{IEEE Transactions on
  Industrial Informatics} 15(4):2146--2157.

\bibitem[{Castillo et~al.(1997)Castillo, Guti{\'e}rrez, \protect\BIBand{}
  Hadi}]{Castillo1997}
Castillo E, Guti{\'e}rrez JM, Hadi AS (1997) Sensitivity analysis in discrete
  {B}ayesian networks. \emph{IEEE Transactions on Systems, Man, and
  Cybernetics-Part A: Systems and Humans} 27(4):412--423.

\bibitem[{Castillo \protect\BIBand{} Kj{\ae}rulff(2003)}]{CK:03}
Castillo E, Kj{\ae}rulff U (2003) Sensitivity analysis in {G}aussian {B}ayesian
  networks using a symbolic-numerical technique. \emph{Reliability Engineering
  \& System Safety} 79(2):139--148.

\bibitem[{Chan \protect\BIBand{} Darwiche(2004)}]{Chan2004}
Chan H, Darwiche A (2004) Sensitivity analysis in {B}ayesian networks: from
  single to multiple parameters. \emph{Proocedings of the 16th Conference on
  Uncertainty in Artificial Intelligence}, 317--325.

\bibitem[{Chan \protect\BIBand{} Darwiche(2005)}]{Chan2005}
Chan H, Darwiche A (2005) A distance measure for bounding probabilistic belief
  change. \emph{International Journal of Approximate Reasoning} 38:149--174.

\bibitem[{Chastaing et~al.(2012)Chastaing, Gamboa, \protect\BIBand{}
  Prieur}]{Chastaing2012}
Chastaing G, Gamboa F, Prieur C (2012) Generalized {H}oeffding-{S}obol
  decomposition for dependent variables-application to sensitivity analysis.
  \emph{Electronic Journal of Statistics} 6:2420--2448.

\bibitem[{Coup{\'e} \protect\BIBand{} Van Der~Gaag(2002)}]{Coupe2002}
Coup{\'e} VMH, Van Der~Gaag LC (2002) Properties of sensitivity analysis of
  {B}ayesian belief networks. \emph{Annals of Mathematics and Artificial
  Intelligence} 36(4):323--356.

\bibitem[{Darwiche(2009)}]{Darwiche2009}
Darwiche A (2009) \emph{Modeling and reasoning with {B}ayesian networks}
  (Cambridge University Press).

\bibitem[{Darwiche(2010)}]{samIam}
Darwiche A (2010) \emph{samIam: Sensitivity analysis, modeling, inference and
  more}. \urlprefix\url{http://reasoning.cs.ucla.edu/samiam/}.

\bibitem[{Do \protect\BIBand{} Razavi(2020)}]{Do2020}
Do NC, Razavi S (2020) Correlation effects? {A} major but often neglected
  component in sensitivity and uncertainty analysis. \emph{Water Resources
  Research} 56(3):e2019WR025436.

\bibitem[{Douglas-Smith et~al.(2020)Douglas-Smith, Iwanaga, Croke,
  \protect\BIBand{} Jakeman}]{Douglas2020}
Douglas-Smith D, Iwanaga T, Croke BF, Jakeman AJ (2020) Certain trends in
  uncertainty and sensitivity analysis: an overview of software tools and
  techniques. \emph{Environmental Modelling \& Software} 124:104588.

\bibitem[{Drury et~al.(2017)Drury, Valverde-Rebaza, Moura, \protect\BIBand{}
  de~Andrade~Lopes}]{Drury2017}
Drury B, Valverde-Rebaza J, Moura MF, de~Andrade~Lopes A (2017) A survey of the
  applications of {B}ayesian networks in agriculture. \emph{Engineering
  Applications of Artificial Intelligence} 65:29--42.

\bibitem[{Eybpoosh et~al.(2011)Eybpoosh, Dikmen, \protect\BIBand{}
  Talat~Birgonul}]{Eybpoosh2011}
Eybpoosh M, Dikmen I, Talat~Birgonul M (2011) Identification of risk paths in
  international construction projects using structural equation modeling.
  \emph{Journal of Construction Engineering and Management} 137(12):1164--1175.

\bibitem[{G{\'o}mez-Villegas et~al.(2013)G{\'o}mez-Villegas, Main,
  \protect\BIBand{} Susi}]{Gomez2013}
G{\'o}mez-Villegas MA, Main P, Susi R (2013) he effect of block parameter
  perturbations in {G}aussian {B}ayesian networks: Sensitivity and robustness.
  \emph{Information Sciences} 222:439--458.

\bibitem[{Guo et~al.(2015)Guo, Cheng, Levina, Michailidis, \protect\BIBand{}
  Zhu}]{Guo2015}
Guo J, Cheng J, Levina E, Michailidis G, Zhu J (2015) Estimating heterogeneous
  graphical models for discrete data with an application to roll call voting.
  \emph{The Annals of Applied Statistics} 9(2):821.

\bibitem[{H{\"a}nninen \protect\BIBand{} Kujala(2012)}]{Hanninen2012}
H{\"a}nninen M, Kujala P (2012) Influences of variables on ship collision
  probability in a {B}ayesian belief network model. \emph{Reliability
  Engineering \& System Safety} 102:27--40.

\bibitem[{Iooss \protect\BIBand{} Prieur(2019)}]{Iooss2019}
Iooss B, Prieur C (2019) Shapley effects for sensitivity analysis with
  correlated inputs: comparisons with {S}obol' indices, numerical estimation
  and applications. \emph{International Journal for Uncertainty Quantification}
  9(5).

\bibitem[{Kj{\ae}rulff(1990)}]{Kjaerulff:90}
Kj{\ae}rulff U (1990) Triangulation of graphs - algorithms giving small total
  state space. Technical Report R 90-09, Department of Mathematics and Computer
  Science, Strandvejen, DK 9000 Aalborg, Denmark.

\bibitem[{Kj{\ae}rulff \protect\BIBand{} van~der Gaag(2000)}]{Kjaerulff2000}
Kj{\ae}rulff U, van~der Gaag LC (2000) Making sensitivity analysis
  computationally efficient. \emph{Proceedings of the Sixteenth Conference on
  Uncertainty in Artificial Intelligence}, 317--325.

\bibitem[{Kleemann et~al.(2017)Kleemann, Celio, \protect\BIBand{}
  F{\"u}rst}]{Kleemann2017}
Kleemann J, Celio E, F{\"u}rst C (2017) Validation approaches of an
  expert-based {B}ayesian belief network in northern {G}hana, {W}est {A}frica.
  \emph{Ecological modelling} 365:10--29.

\bibitem[{Koller et~al.(2009)Koller, Friedman, \protect\BIBand{}
  Bach}]{Koller2009}
Koller D, Friedman N, Bach F (2009) \emph{Probabilistic graphical models:
  principles and techniques} (MIT press).

\bibitem[{L.~Salemi et~al.(2019)L.~Salemi, Song, Nelson, \protect\BIBand{}
  Staum}]{Salemi2019}
L~Salemi P, Song E, Nelson BL, Staum J (2019) Gaussian {M}arkov random fields
  for discrete optimization via simulation: framework and algorithms.
  \emph{Operations Research} 67(1):250--266.

\bibitem[{Lee \protect\BIBand{} Cichocki(2017)}]{LC:17}
Lee N, Cichocki A (2017) Fundamental tensor operations for large-scale data
  analysis using tensor network formats. \emph{Multidimensional Systems and
  Signal Processing} 1--40.

\bibitem[{Leonelli et~al.(2017)Leonelli, G\"{o}rgen, \protect\BIBand{}
  Smith}]{Leonelli2017}
Leonelli M, G\"{o}rgen C, Smith JQ (2017) Sensitivity analysis in multilinear
  probabilistic models. \emph{Information Sciences} 411:84--97.

\bibitem[{Leonelli et~al.(2021)Leonelli, Ramanathan, \protect\BIBand{}
  Wilkerson}]{Leonelli2021}
Leonelli M, Ramanathan R, Wilkerson RL (2021) {Sensitivity and robustness
  analysis in Bayesian networks with the bnmonitor R package}.
  \emph{arXiv:2107.11785} .

\bibitem[{Leonelli \protect\BIBand{} Riccomagno(2019)}]{Leonelli2019}
Leonelli M, Riccomagno E (2019) A geometric characterisation of sensitivity
  analysis in monomial models. \emph{arXiv:1901.02058} .

\bibitem[{Li \protect\BIBand{} Mahadevan(2017)}]{LM:17}
Li C, Mahadevan S (2017) Sensitivity analysis of a {B}ayesian network.
  \emph{ASCE-ASME Journal of Risk and Uncertainty in Engineering Systems, Part
  J: Mechanical Engineering} 4(1):011003.

\bibitem[{Luque \protect\BIBand{} D{\'\i}ez(2010)}]{Luque2010}
Luque M, D{\'\i}ez FJ (2010) Variable elimination for influence diagrams with
  super value nodes. \emph{International Journal of Approximate Reasoning}
  51(6):615--631.

\bibitem[{Ma et~al.(2019)Ma, Sudakov, Strong, \protect\BIBand{}
  Golden}]{Ma2019}
Ma YP, Sudakov I, Strong C, Golden KM (2019) Ising model for melt ponds on
  arctic sea ice. \emph{New Journal of Physics} 21(6):063029.

\bibitem[{Makaba et~al.(2020)Makaba, Doorsamy, \protect\BIBand{}
  Paul}]{Makaba2020}
Makaba T, Doorsamy W, Paul BS (2020) Bayesian network-based framework for
  cost-implication assessment of road traffic collisions. \emph{International
  Journal of Intelligent Transportation Systems Research} 19:240--253.

\bibitem[{Marrel et~al.(2009)Marrel, Iooss, Laurent, \protect\BIBand{}
  Roustant}]{MILR:09}
Marrel A, Iooss B, Laurent B, Roustant O (2009) Calculations of {Sobol} indices
  for the {Gaussian} process metamodel. \emph{Reliability Engineering \& System
  Safety} 94:742--751.

\bibitem[{McLachlan et~al.(2020)McLachlan, Dube, Hitman, Fenton,
  \protect\BIBand{} Kyrimi}]{Mclachlan2020}
McLachlan S, Dube K, Hitman GA, Fenton N, Kyrimi E (2020) Bayesian networks in
  healthcare: distribution by medical condition. \emph{Artificial Intelligence
  in Medicine} 101912.

\bibitem[{Nojavan et~al.(2017)Nojavan, Qian, \protect\BIBand{}
  Stow}]{Nojavan2017}
Nojavan F, Qian SS, Stow CA (2017) {Comparative analysis of discretization
  methods in Bayesian networks}. \emph{Environmental Modelling \& Software}
  87:64--71.

\bibitem[{Qazi et~al.(2016)Qazi, Quigley, Dickson, \protect\BIBand{}
  Kirytopoulos}]{QQDK:16}
Qazi A, Quigley J, Dickson A, Kirytopoulos K (2016) Project complexity and risk
  management ({ProCRiM}): Towards modelling project complexity driven risk
  paths in construction projects. \emph{International Journal of Project
  Management} 34(7):1183--1198.

\bibitem[{Rai(2014)}]{Rai:14}
Rai P (2014) \emph{Sparse Low Rank Approximation of Multivariate Functions --
  Applications in Uncertainty Quantification}. Doctoral thesis, {\'{E}cole
  Centrale Nantes},
  \urlprefix\url{https://tel.archives-ouvertes.fr/tel-01143694}.

\bibitem[{Razavi et~al.(2021)Razavi, Jakeman, Saltelli, Prieur, Iooss,
  Borgonovo, Plischke, Lo~Piano, Iwanaga, Becker et~al.}]{Razavi2021}
Razavi S, Jakeman A, Saltelli A, Prieur C, Iooss B, Borgonovo E, Plischke E,
  Lo~Piano S, Iwanaga T, Becker W, et~al. (2021) The future of sensitivity
  analysis: an essential discipline for systems modeling and policy support.
  \emph{Environmental Modelling \& Software} 137:104954.

\bibitem[{Robeva \protect\BIBand{} Seigal(2018)}]{RS:18}
Robeva E, Seigal A (2018) {Duality of graphical models and tensor networks}.
  \emph{Information and Inference: A Journal of the IMA} 8(2):273--288.

\bibitem[{Rohmer(2020)}]{Rohmer2020}
Rohmer J (2020) Uncertainties in conditional probability tables of discrete
  {B}ayesian belief networks: a comprehensive review. \emph{Engineering
  Applications of Artificial Intelligence} 88:103384.

\bibitem[{Saltelli et~al.(2019)Saltelli, Aleksankina, Becker, Fennell,
  Ferretti, Holst, Li, \protect\BIBand{} Wu}]{Saltelli2019}
Saltelli A, Aleksankina K, Becker W, Fennell P, Ferretti F, Holst N, Li S, Wu Q
  (2019) Why so many published sensitivity analyses are false: a systematic
  review of sensitivity analysis practices. \emph{Environmental Modelling \&
  Software} 114:29--39.

\bibitem[{Saltelli et~al.(2008)Saltelli, Ratto, Andres, Campolongo, Cariboni,
  Gatelli, Saisana, \protect\BIBand{} Tarantola}]{SRACCGST:08}
Saltelli A, Ratto M, Andres T, Campolongo F, Cariboni J, Gatelli D, Saisana M,
  Tarantola S (2008) \emph{Global Sensitivity Analysis: The Primer} (John Wiley
  \& Sons, Ltd.).

\bibitem[{Saltelli et~al.(2000)Saltelli, Tarantola, \protect\BIBand{}
  Campolongo}]{Saltelli2000}
Saltelli A, Tarantola S, Campolongo F (2000) Sensitivity anaysis as an
  ingredient of modeling. \emph{Statistical Science} 15(4):377--395.

\bibitem[{Scutari(2010)}]{Scutari2010}
Scutari M (2010) Learning {B}ayesian networks with the bnlearn {R} package.
  \emph{Journal of Statistical Software} 35(3):1--22.

\bibitem[{Sheikholeslami et~al.(2020)Sheikholeslami, Gharari, Papalexiou,
  \protect\BIBand{} Clark}]{Sh2020}
Sheikholeslami R, Gharari S, Papalexiou SM, Clark MP (2020) Viscous: A
  variance-based sensitivity analysis using copulas for efficient
  identification of dominant hydrological processes. \emph{Earth and Space
  Science Open Archive} 37.

\bibitem[{Sobol(1990)}]{Sobol:90}
Sobol IM (1990) Sensitivity estimates for nonlinear mathematical models (in
  {R}ussian). \emph{Mathematical Models} 2:112--118.

\bibitem[{Stoudenmire \protect\BIBand{} Schwab(2016)}]{SS:16}
Stoudenmire E, Schwab DJ (2016) Supervised learning with tensor networks. Lee
  D, Sugiyama M, Luxburg U, Guyon I, Garnett R, eds., \emph{Advances in Neural
  Information Processing Systems}, volume~29.

\bibitem[{Sudret(2008)}]{Sudret:08}
Sudret B (2008) Global sensitivity analysis using polynomial chaos expansions.
  \emph{Reliability Engineering \& System Safety} 93(7):964 -- 979.

\bibitem[{{Ye} \protect\BIBand{} {Lim}(2018)}]{YL:18}
{Ye} K, {Lim} LH (2018) {Tensor network ranks}. \emph{arXiv:1801.02662} .

\bibitem[{Zhang et~al.(2015)Zhang, Yang, Oseledets, Karniadakis,
  \protect\BIBand{} Daniel}]{ZYOKD:15}
Zhang Z, Yang X, Oseledets IV, Karniadakis GE, Daniel L (2015) Enabling
  high-dimensional hierarchical uncertainty quantification by {ANOVA} and
  tensor-train decomposition. \emph{IEEE Transactions on Computer-Aided Design
  of Integrated Circuits and Systems} 34(1):63--76.

\end{thebibliography}

\end{document}